\title{Score-based generative models are 
provably robust:\\ an uncertainty quantification perspective}
\newtheorem{thm}{Theorem}[section]
\newtheorem{lem}[thm]{Lemma}
\newtheorem{prop}[thm]{Proposition}
\newtheorem{cor}[thm]{Corollary}
\newtheorem{rmk}[thm]{Remark}
\newcommand{\MK}[1]{\textcolor{orange}{MK: #1}}
\newcommand{\pt}{\partial_t}
\newcommand{\dive}{{\rm div}}
\newcommand{\R}{\mathbb{R}}
\newcommand{\T}{\mathbb{T}}
\newcommand{\N}{\mathbb{N}}
\newcommand{\unid}{\frac{1}{\text{vol}(R\T^d)}}
\newcommand{\bd}{\mathbf{d}}
\newcommand{\btheta}{\mathbf{\theta}}
\newcommand{\bs}{\mathbf{s}}
\newcommand{\hpiN}{\hat{\pi}^N}
\newcommand{\pihed}{\hat{\pi}^{N,\epsilon}}
\newcommand{\pie}{\pi^{\epsilon}}
\newcommand{\etae}{\eta^{\epsilon}}
\newcommand{\etaNe}{\eta^{N,\epsilon}}
\newcommand{\cJ}{\mathcal{J}}
\newcommand{\de}{\text{d}}
\author{%
  Nikiforos Mimikos-Stamatopoulos \\
  Department of Mathematics\\
  University of Chicago\\
  \texttt{nmimikos@math.uchicago.edu} \\
  \And
  Benjamin J. Zhang$^*$ \\
  Department of Mathematics and Statistics \\
  University of Massachusetts Amherst \\
  \texttt{bjzhang@umass.edu} \\
  \And
  Markos A. Katsoulakis\thanks{M. Katsoulakis and B. Zhang are partially funded by AFOSR grant FA9550-21-1-0354. M.K. is partially funded by  NSF DMS-2307115 and  NSF TRIPODS CISE-1934846.}\\
  Department of Mathematics and Statistics \\
  University of Massachusetts Amherst \\
  \texttt{markos@umass.edu} 
}
\begin{document}

\maketitle

\begin{abstract}
Through an uncertainty quantification (UQ) perspective, we show that score-based generative models (SGMs) are provably robust to the multiple sources of error in practical implementation. Our primary tool is the Wasserstein uncertainty propagation (WUP) theorem, a \emph{model-form UQ} bound that describes how the $L^2$ error from learning the score function propagates to a Wasserstein-1 ($\bd_1$) ball around the true data distribution under the evolution of the Fokker-Planck equation. We show how errors due to (a) finite sample approximation, (b) early stopping, (c) score-matching objective choice, (d) score function parametrization expressiveness, and (e) reference distribution choice, impact the quality of the generative model in terms of a $\bd_1$ bound of computable quantities. The WUP theorem relies on Bernstein estimates for Hamilton-Jacobi-Bellman partial differential equations (PDE) and the regularizing properties of diffusion processes. Specifically, \emph{PDE regularity theory} shows that \emph{stochasticity} is the key mechanism ensuring SGM algorithms are provably robust. The WUP theorem applies to integral probability metrics beyond $\bd_1$, such as the total variation distance and the maximum mean discrepancy. Sample complexity and generalization bounds in $\bd_1$ follow directly from the WUP theorem. Our approach requires minimal assumptions, is agnostic to the manifold hypothesis and avoids absolute continuity assumptions for the target distribution. Additionally, our results clarify the \emph{trade-offs} among multiple error sources in SGMs. 
\end{abstract}

\newpage

\tableofcontents

\section{Introduction}

Score-based generative models (SGMs) \cite{song2020score,ho2020denoising} are highly effective \cite{dhariwal2021diffusion}, producing high quality samples, with more stable and less computationally intensive training methods than generative adversarial nets and normalizing flows \cite{xiao2021tackling}. The models are empirically robust to approximations and errors in learning the score function. While SGM generalization properties have been studied for idealized conditions \cite{chen2022sampling,oko2023diffusion,lee2022convergence,de2022convergence}, analyses of their robustness in practical settings remains underexplored. This paper analyzes SGMs through the \emph{regularity theory of nonlinear partial differential equations}  (PDEs) \cite{evans2022partial}, specifically Hamilton-Jacobi-Bellman (HJB) equations \cite{tran2021hamilton}. Our main result is the \emph{Wasserstein uncertainty propagation} (WUP) theorem (Theorem \ref{thm: MainFPestimateUndreL2bounds}), a versatile model-form uncertainty quantification (UQ) bound which we use to theoretically explain the robustness of SGMs to approximation errors in practical implementation. Generalization bounds for integral probability metrics (IPMs), such as the Wasserstein-1 ($\bd_1$) and the total variation (TV) distance 
follow directly from the WUP theorem.

In the context of SGMs, the WUP theorem shows how an $L^2$ neighborhood around the true score function propagates to an IPM neighborhood around the true data distribution. By relating how various approximations in SGMs contributes to $L^2$ error with respect to the true score function, we establish how well the resulting SGM generalizes. Theorem \ref{thm: ESMBounds} shows how the error in the learned score-function with respect to the explicit score-matching objective, or uniform-in-time $L^2$ error, and the choice of initial condition define the radii of $\bd_1$ and TV neighborhoods, respectively. Additionally, Theorem \ref{thm: MainTheorem} addresses the case where the score function is learned from finite data using the denoising score-matching (DSM) objective and incorporates early stopping.

Our bounds capture \emph{trade-offs} among the errors. The ability of SGMs to generalize depends on choices such as the early stopping time and how overtraining to the DSM objective and neglecting early stopping lead to SGMs that \emph{memorize} and overfit to the data. Notably, our approach relies on minimal assumptions on the data distribution, being agnostic to the manifold hypothesis. This contrasts with existing convergence results which assume the hypothesis \cite{de2022convergence} or not \cite{chen2022sampling,lee2022convergence} \emph{a priori}. Furthermore, unlike previous work, we obtain our generalization bounds with respect to the $\bd_1$ distance directly, \emph{without} appealing to the Girsanov theorem, the Kullback-Leibler divergence, the $\chi^2$ divergence, or Pinsker's inequality \cite{chen2022sampling,chen2023improved,lee2022convergence}. This suggests our bounds may be \emph{sharper} than those which bound stronger norms and divergences. A notable feature of our DSM generalization bound is that the error bound is a computable function of the DSM objective, contrasting with prior results typically assume the learned score is close to the truth with respect to the ESM objective \cite{chen2022sampling}.

The WUP theorem also enables \emph{robust uncertainty quantification} (UQ) for score-based generative models, a rare capability in generative models. Robust UQ recoganizes that learning complex models involves multiple sources of uncertainty due to modeling choices and imperfect data. The \emph{distributionally robust} perspective \cite{Rahimian2019DistributionallyRO} quantifies a uncertainty set based on divergences and probability metrics \cite{dupuis2011uq,GOTOH2018448,kuhn2018DRO_Wass,jegelka2019DRO_MMD} to measure the impact of model uncertainty around a baseline model. The drawback is that these sets are typically difficult to find computationally. The WUP theorem is an example of a robust UQ bound for IPMs that is computable.

\subsection*{Contributions}

\begin{itemize}
    \item We introduce the use of \emph{regularity theory of nonlinear PDEs} for analyzing generative flows \cite{evans2022partial,tran2021hamilton}. Our key idea is using the \emph{Kolmogorov backward equation} to study of the evolution of IPMs under a generative flow, yielding generalization bounds. WUP is one particular application of this idea, and while we use it to study SGMs, this analysis is not limited to only SGMs. We show that the regularizing properties of the underlying Fokker-Planck equation imply SGM's robustness to errors with few assumptions on the data distribution. An intuitive explanation of our main technical contribution is provided in Section~\ref{sec:WUPproofsketch}.

    \item The WUP theorem (Theorem \ref{thm: MainFPestimateUndreL2bounds}), which we state and prove, describes how an $L^2$ ball centered around the true score function maps to a neighborhood of IPMs (such as $\bd_1$, TV, and MMD \cite{novak2018sobolevrkhs}). WUP  is a \emph{model-form uncertainty quantification bound}, which maps uncertainties introduced by modeling choices when practically implementing SGM. The resulting generalization bounds WUP produces demonstrate that with properly chosen model parameters, SGM is robust to errors due to score-matching, finite sample approximation, early stopping time, and choice of the reference distribution. (Theorems \ref{thm: ESMBounds}, \ref{thm: MainTheorem}, and \ref{thm: DSMAverage}).

    \item    When applied to SGM, WUP produces generalization bounds under minimal hypotheses on the data distribution and score function, and explains the impact of implementation errors has on generalization. Our approach is \emph{agnostic} to the manifold hypothesis, applying whether or not the data distribution has a density. Moreover, it is adaptable to additional assumptions about the target distribution to yield improved generalization bounds. 

\end{itemize}

\subsection*{Related work} 

Convergence and generalization of SGMs have been well-studied. Many approaches \cite{chen2022sampling,lee2022convergence,conforti2023score} assume the target distribution is absolutely continuous with respect to a Gaussian, and obtain generalization bounds for TV, $\chi^2$, and $\bd_1$ by bounding the KL divergence, a stronger divergence, via the Girsanov theorem \cite{chen2023improved,oko2023diffusion,conforti2023score},  Pinsker's inequality \cite{chen2022sampling}, functional inequalities \cite{lee2022convergence}, or other means \cite{chen2023improved,lee2023convergence}. 

While \cite{zhang2024minimax} shows that kernel estimators for the score function are minimax optimal distribution estimators, they make no connections to deep learning-based SGMs, and they assume the target distribution is sub-Gaussian and is in a Sobolev space. Meanwhile, \cite{oko2023diffusion} has comprehensive sample complexity results that considers neural net approximations in the finite sample regime. Their results, however, are specialized to distributions in a Besov space, and, similar to other work \cite{chen2022sampling,chen2023improved,lee2022convergence,lee2023convergence}, rely on bounding the KL divergence, which breakdown under the manifold hypothesis. 

 Our results are similar to \cite{de2022convergence} which derives $\bd_1$ bounds directly, proving SGM convergence under the manifold hypothesis. However, the analysis assumes a particular discretization of the SGM. In \cite{kwon2022score}, an uncertainty propagation bound for the Wasserstein-2 distance is proven using a similar approach to this work. Their work, however, relies on strong, uncheckable assumptions on the score function and target measure, does not address the errors we study here, and is not extendable to IPMs.

\section{Background and notation}

Let dimension $d\in \N$, and terminal time $T>0$. Denote $\T^d\subset \R^d$ to be the unit torus and $R\T^d\subset\R^d$ to be the torus of radius $R>0$. Let $\mathcal{P}(\Omega)$ be the space of probability distributions on $\Omega$, where $\Omega$ is $\R^d$ or $R\T^d$. Let $\pi\in \mathcal{P}(\Omega)$ be the target data distribution, known only through a finite number of samples $\{z_j\}_{j = 1}^N\sim \pi$. Let $f:[0,T]\times \R^d\to \R$ be a vector field and $\sigma:\R\to\R$ be a positive function. Score-based generative modeling \cite{song2020score,ho2020denoising} is based on considering two SDEs whose flow maps are inverses of each other. Define $Y(s)$, $X(t)$ be the forward and reverse diffusion processes over time interval $s,t\in [0,T]$ that evolve according to 
\begin{align}
            \de Y(s)& = -f(T-s,Y(s)) \de s + \sigma(T-s) \de W(s),\,\,\,\, Y(0) \sim \pi     \label{eq:noising}  \\
            \label{eq:denoising}  \de X(t) &= \left[f(t,X(t)) + \sigma(t)^2 \nabla \log \eta^\pi(T-t,X(t))\right] \de t + \sigma(t) \de W(t),\,\,\,\, X(0) \sim m_0,
\end{align}
where $Y(s) \sim \eta^\pi(s,\cdot)$. From \cite{anderson1982reverse}, it is known that if $m_0 = \eta^\pi(T,\cdot)$, then $X(t) \sim \eta^\pi(T-t,\cdot)$. SGMs generated new samples from $\pi$ simulating trajectories of the reverse process \eqref{eq:denoising} using an approximate score function $\bs_\theta \approx \bs = \nabla \log \eta^\pi$. Typically $f$ and $\sigma$ are chosen such that $\eta^\pi(T,\cdot)$ is well-approximated by a normal distribution, which is then used as the initial distribution.

The score function is learned via samples $\{z_j\}_{j = 1}^N$ from $\pi$ by minimizing one of several \emph{score-matching} objectives. Let $\mathbf{s}_\theta$ be some function where the parameters $\theta$ are learned via one of three score-matching objectives. For a path $\rho:[0,T]\rightarrow \mathcal{P}(\Omega) $, we define the functionals
\begin{equation}
    \label{eq: definitionOfJ}
    \cJ(\rho,\btheta) = \int_0^T \int_{\Omega} \left|\mathbf{s}_\theta - \nabla \log \rho \right|^2  \de \rho(s) \de s\,\,\,\,
    \cJ_L(\rho,\btheta) = \int_0^T \int_{\Omega} \left( |\mathbf{s}_\theta|^2 + 2 \nabla \cdot \mathbf{s}_\theta \right)  \de \rho(s)\de s,
\end{equation}
where the subscript $L$ highlights the linear dependence of $\cJ_L$ on the underlying measure $\rho$. The \emph{explicit score matching objective} (ESM) is $J_\text{ESM}(\eta^\pi,\theta) = \cJ(\eta^\pi,\btheta)$. 
As evaluation of the true score function $\nabla \log \eta^\pi$ is typically inaccessible, the \emph{implicit} (ISM) $J_\text{ISM}(\eta^\pi,\theta) = \cJ_L(\eta^\pi,\theta)$ \cite{song2020sliced} or the \emph{denoising} (DSM) \emph{score-matching} objectives \cite{vincent2011connection,song2020score} are computed in practice
\[
    J_\text{DSM}(\eta^\pi,\theta) = \int_0^T  \int_{\Omega} \int_\Omega \left|\mathbf{s}_\theta - \nabla \log \eta^{x'} \right|^2  \de \eta^{x'}(s) \de \pi (x') \de s. \label{eq: dsmobj}
\]
Here $\eta^{x'}(s)$ denotes the probability transition kernel from $x'$ to $x$ of \eqref{eq:noising} at time $s$. The DSM objective is most frequently used in practice as it does not require computing derivatives of $\bs_\theta$. It does, however, require the evaluation of $\eta^{x'}(s)$ in closed form, which is typically only accessible for linear SDEs.

\begin{rmk}[Choice of domain $\Omega$] \label{rmk:domain}
While our approach will also produce bounds when $\Omega = \R^d$, we primarily focus on the torus $R\T^d$, which is equivalent to a bounded domain with periodic boundary conditions. This choice ensures that the long-time behavior of the noising process \eqref{eq:noising} converges to the uniform distribution on $R\T^d$. Therefore, we set $f = \mathbf{0}$ and $\sigma(t) = \sqrt{2}$ instead of using the Ornstein-Uhlenbeck process. We make this choice for mathematical clarity, however our results are generally apply, with minor modifications, to the entire space $\mathbb{R}^d$ and for other noising processes. 

\end{rmk}

\par

\subsection*{Equivalence of score-matching objectives in the finite sample regime}

Finite sample approximations of ESM, ISM, and DSM are not generally equivalent. However, \cite{vincent2011connection,pidstrigach2022score}, show that DSM becomes equivalent to ESM and ISM in the finite sample regime when $\eta^\pi(s)$ is replaced with its $\emph{kernel}$ density estimate $\eta^N(s)$, where $\eta^N(0) = \pi^N = \frac{1}{N}\sum_{i = 1}^N \delta_{z_i}$. The kernel estimate, however, does not have a well-defined score at $s = 0$, so the DSM objective is often integrated only for $s\in [\epsilon,T]$, an example of early-stopping in SGM \cite{song2020score,song2020score_sde_pytorch}. In continuous time, this has the effect of score-matching for the mollified distribution $\pi^{N,\epsilon} = \pi^N \star \rho_\epsilon$, where $\rho_\epsilon$ is the transition probability kernel for $s = \epsilon$\footnote{Here, $\star$ denotes convolution.}. Then it can be shown that
\begin{align}
    \cJ(\eta^{N,\epsilon},\theta) = J_{\text{DSM}}(\eta^{N,\epsilon},\theta) = J_{\text{ESM}}(\eta^{N,\epsilon},\theta) = J_{\text{ISM}}(\eta^{N,\epsilon},\theta)+2\|\nabla\sqrt{\etaNe}\|_2^2 . 
\end{align}

\section{An uncertainty quantification approach to generalization in SGMs}

We describe our UQ approach to generalization in SGMs and overview our main results. Discussion of the proof methods is deferred to Sections \ref{sec:WUPproofsketch} and \ref{sec:SGMgeneralizationSketch}. Our primary goal is to study how practical and approximation errors made in implementing SGMs translate into errors in the generative distribution relative to the true distribution.

\subsection{Source of errors in score-based generative modeling}
We attribute errors of SGM to the following six sources:

\begin{itemize}
\item \textbf{Data distribution is only accessible via samples $e_1$}: The target distribution is only known through a finite set of samples. In practice, the regularity of the score function and data distribution, such as Lipschitzianity or whether the distribution has a density, is unknowable.
\item \textbf{Choice of score-matching objective $e_2$}:   In practice, score-matching objectives are approximated via samples. The DSM objective is most frequently used as it avoids computing derivatives of the score function. While DSM and ISM are equivalent given the exact density evolution $\eta(x,s)$, they differ when approximated with finite samples. Previous analysis typically assumes that the learned score function is close to the true score function within some $L^2$ distance, i.e., a ball determined by $J_\text{ESM}$. In contrast to previous work \cite{chen2023improved,lee2022convergence,chen2022sampling}, 
we show in Theorem \ref{thm: DSMtoESM} how training through DSM translates into a bound for ESM. Moreover we prove Proposition~\ref{prop: ESMBoundImpliesDensity} which states that if there exists a neural net that well-approximates the true score then that the target density is necessarily regular. 

\item \textbf{Expressiveness of the score function $e_3$.} The expressivity of neural net approximations to the score function will depend on the particular expressivity of the parametrization. 

\item \textbf{Choice of reference distribution $e_4$.} With access to the exact score function, the denoising process \eqref{eq:denoising} produces trajectories that sample from $\pi$ only if the initial condition starts at $\eta^\pi(\cdot,T)$. In practice, however, the initial distribution is usually a Gaussian approximation of $\eta^\pi$ or, in the case of the OU noising process, its corresponding stationary distribution. 

\item \textbf{Early stopping of the denoising process $e_5$.} In practice, the score-matching objective is integrated from $s \in [\epsilon, T]$, for small $\epsilon$, instead of $s = 0$ \cite{song2020score,song2020score_sde_pytorch}. This prevents the SGM from \emph{memorizing} the training data \cite{li2024good,zhang2024wasserstein}, and is equivalent to running the denoising process for $t \in [0,T-\epsilon]$. Early stopping is \emph{crucial} when the data distribution is supported on a low-dimensional manifold, where it has no density with respect to Lebesgue measure. Previous analyses of SGM often adjust $\epsilon$ to optimize generalization bounds \cite{oko2023diffusion,zhang2024minimax}.

\item \textbf{Discretization error $e_6$.} The denoising SDE must be solved via a numerical scheme. Previous work \cite{chen2022sampling,chen2023improved,lee2022convergence,de2022convergence} have considered the impact of discretization error on the generalization abilities of SGM. While our analysis does not consider the discretization error, our approach can be extended to study discretization error through the use of modified equations \cite{abdulle2012high}. 
\end{itemize}

\subsection{Model-form uncertainty quantification}

 Let $\bd$ be an \emph{integral probability metric} (IPM) between measures $\nu_1, \nu_2\in \mathcal{P}(\Omega)$ 
\begin{align}
    \bd(\nu_1,\nu_2) = \sup_{\psi \in \mathcal{X}} \int \psi(x) d(\nu_2-\nu_1),
\end{align}
for some function space $\mathcal{X}$. We assume $\mathcal{X} = \{\psi: \Omega \rightarrow \R, \|\nabla\psi\|_{\infty}\leq 1\}$ or $ \{\psi: \Omega \rightarrow \R, \|\psi\|_{\infty} \leq 1\}$, corresponding to the Wasserstein-1 ($\bd_1$) and total variation distances, respectively. Recall that if $\nu_1,\nu_2$ have densities, then their TV distance is equivalent to the $L^1$ distance between their densities. 

Let $\pi$ be the target data distribution and take the stationary distribution of the noising process on the torus $m_g(0) = \frac{1}{vol(R\T^d)}$ as the initial condition. Given a learned score function $\mathbf{s}_\theta$, let $m_g(T)$ be the generative distribution produced by SGM. We study how IPMs between $\pi$ and $m_g(T)$ relate to errors from the first five sources of errors discussed above, i.e., we seek bounds of the form 
\begin{align}
    \bd(m_g(T),\pi) \le \mathcal{F}( e_1,e_2,e_3,e_4,e_5). \label{eq:cartoonbound}
\end{align}
Our key insight is that for score-based generative modeling, we can derive bounds of the form \eqref{eq:cartoonbound} via a \emph{model-form uncertainty quantification} bound for the generative Fokker-Planck equation. The Wasserstein Uncertainty Propagation theorem formalizes this insight.

\subsection{Wasserstein uncertainty propagation theorem}

The WUP theorem is a general statement about how $L^2$ neighborhoods in the space of drift functions map to neighborhoods in $\mathcal{P}(\Omega)$ defined by IPMs. 

\begin{thm}[Wasserstein Uncertainty Propagation] \label{thm: MainFPestimateUndreL2bounds}
Let $\Omega= R\T^d$ or $\R^d$. Let $b^1,b^2:[0,T]\times \Omega \rightarrow \R^d$ be given with $\|\nabla b^1\|_{\infty} < \infty$, and $m_1,m_2 \in \mathcal{P}(\Omega)$. If $m^i$ for $i=1,2$ are given by 
    \begin{equation}\label{eq:FPEevolution}
            \pt m^i -\Delta m^i -\dive(m^ib^i)=0,\,\,\,\, m^i(0) = m_i    \end{equation}
    then, up to a universal constant $C>0$, we have the following: 
    \begin{itemize}
        \item  If  $\!\begin{aligned}
           \sup_{0\le t\le T} \|(b^2 - b^1)(t) \|_{L^2(m^2(t))} \coloneqq \sup_{0\le t \le T} \left(\int_\Omega \left| (b^2 - b^1)(t,x)\right|^2 m^2(t,x) dx\right)^{1/2} \le \varepsilon_1, 
        \end{aligned}$ then 
        \begin{equation}\label{eq: L1-D1Estimate}
        \|m^2(T)-m^1(T)\|_{L^1(\Omega)} \leq C(\sqrt{T\|\nabla b^1\|_{\infty}}+1)\left( \frac{1}{\sqrt{T}} \bd_1(m^1,m^2) +\sqrt{T}\varepsilon_1\right),
    \end{equation}
    and
    \begin{equation}\label{eq: L1-L1Estimate}
        \|m^2(T)-m^1(T)\|_{L^1(\Omega)} \leq C(\sqrt{T\|\nabla b^1\|_{\infty}}+1)\left(\|m_1-m_2\|_{L^1(\Omega)} +\sqrt{T}\varepsilon_1 \right).
    \end{equation}

\item For $\Omega = R\T^d$, if 
$ \! \begin{aligned}
    \|b^2-b^1\|_{L^2(m^2)} \coloneqq \left(\int_0^T \int_\Omega \left| (b^2 - b^1)(t,x)\right|^2 m^2(t,x)  dx dt \right)^{1/2} \le \varepsilon_2,
    \end{aligned} $
then
    \begin{equation}\label{eq: D1Torus}
        \bd_1(m^2(T),m^1(T))\leq CR^{\frac{3}{2}}(1+\sqrt{\|\nabla b^1\|_{\infty}})\left( \bd_1(m_1,m_2)+\varepsilon_2\right).
\end{equation}

    \end{itemize}

\end{thm}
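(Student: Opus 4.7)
The plan is to reduce the theorem to gradient estimates for a linear parabolic PDE via duality with the Kolmogorov backward equation. Fix a test function $\psi$ in the class $\mathcal{X}$ defining the IPM $\bd$, and let $u:[0,T]\times \Omega \to \R$ solve the terminal-value problem
\begin{equation*}
\partial_t u - b^1 \cdot \nabla u + \Delta u = 0, \qquad u(T,\cdot)=\psi.
\end{equation*}
Differentiating $t\mapsto \int_\Omega u(t,x)\,m^i(t,x)\,dx$ and integrating by parts in $x$ using \eqref{eq:FPEevolution}, the second-order terms cancel and only a first-order cross term survives:
\begin{equation*}
\tfrac{d}{dt}\int_\Omega u(t)\,m^i(t)\,dx \;=\; \int_\Omega (b^1-b^i)(t)\cdot \nabla u(t)\,m^i(t)\,dx.
\end{equation*}
For $i=1$ the right-hand side vanishes because the Kolmogorov backward drift matches the Fokker--Planck drift, giving $\int \psi\,m^1(T)\,dx = \int u(0)\,m_1\,dx$. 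Integrating in $t$ from $0$ to $T$ and subtracting this from the $i=2$ identity yields the master duality relation
\begin{equation*}
\int \psi\,(m^2(T)-m^1(T))\,dx = \int u(0)\,(m_2-m_1)\,dx + \int_0^T\!\!\int \nabla u \cdot (b^1-b^2)\,m^2\,dx\,dt,
\end{equation*}
so taking $\sup_{\psi\in \mathcal{X}}$ reduces each of the three estimates to appropriate control of $u(0)$ and $\nabla u$.

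For the initial-data term, in \eqref{eq: L1-D1Estimate} and \eqref{eq: D1Torus} I would use Kantorovich--Rubinstein duality to obtain $|\int u(0)(m_2-m_1)| \le \|\nabla u(0)\|_\infty\,\bd_1(m_1,m_2)$; for \eqref{eq: L1-L1Estimate} I would instead bound it by $\|u(0)\|_\infty\,\|m_1-m_2\|_{L^1}$ and invoke the maximum principle $\|u(0)\|_\infty \le \|\psi\|_\infty \le 1$. For the drift-mismatch term, Cauchy--Schwarz in $x$ against the probability density $m^2(t)$ yields
\begin{equation*}
\left|\int_\Omega \nabla u \cdot (b^1-b^2)\,m^2\,dx\right| \le \|\nabla u(t)\|_\infty\,\|(b^1-b^2)(t)\|_{L^2(m^2(t))},
\end{equation*}
so integrating in time and inserting the sup-in-time hypothesis on $\varepsilon_1$ yields the first two estimates, while applying Cauchy--Schwarz directly in spacetime against the stronger hypothesis on $\varepsilon_2$ produces $\|\nabla u\|_{L^2(m^2\,dx\,dt)}\,\varepsilon_2$, needed for \eqref{eq: D1Torus}.

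The main technical obstacle is obtaining Bernstein-type pointwise gradient estimates for $u$: for $\|\psi\|_\infty \le 1$,
\begin{equation*}
\|\nabla u(t)\|_\infty \le C\bigl(\sqrt{\|\nabla b^1\|_\infty} + (T-t)^{-1/2}\bigr),
\end{equation*}
and in the $W^{1,\infty}$ case on $R\T^d$, a bound of the form $\|\nabla u(t)\|_\infty \le CR^{3/2}(1+\sqrt{\|\nabla b^1\|_\infty})$. These would be derived by applying the parabolic maximum principle to the equation satisfied by $w=|\nabla u|^2$; after passing to reversed time $\tau = T-t$ and using the Bochner identity $\nabla u \cdot \nabla \Delta u = \tfrac{1}{2}\Delta w - |\nabla^2 u|^2$, direct computation shows
\begin{equation*}
\partial_\tau w + b^1 \cdot \nabla w - \Delta w + 2|\nabla^2 u|^2 \;=\; -2(\nabla u)^\top (Db^1)\,\nabla u \;\le\; 2\|\nabla b^1\|_\infty \,w,
\end{equation*}
so $w$ is a subsolution modulo a reaction term controlled by $\|\nabla b^1\|_\infty$; combining Gr\"onwall with the standard heat-kernel smoothing $\|\nabla u(t)\|_\infty \lesssim (T-t)^{-1/2}\|\psi\|_\infty$ in the $L^\infty$ case gives the first bound. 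Integrating $(T-t)^{-1/2}$ against $\varepsilon_1$ over $[0,T]$ produces the $\sqrt T$ factor in front of $\varepsilon_1$ and, at $t=0$, the $1/\sqrt T$ factor in front of $\bd_1(m_1,m_2)$. For the torus estimate, subtracting the mean of $\psi$ (invisible to the IPM) and applying the Poincar\'{e}-type bound $\|\psi\|_\infty \lesssim R\|\nabla\psi\|_\infty$ on $R\T^d$ accounts for the geometric factor $R^{3/2}$.
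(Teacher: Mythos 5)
Your duality framework is exactly the paper's: the Kolmogorov backward equation for $\phi$ (your $u$), the master identity splitting the error into an initial-data term and a drift-mismatch term, Kantorovich--Rubinstein versus the maximum principle for the former, and Cauchy--Schwarz in space (resp.\ space-time) against $m^2$ for the latter. The normalization of $\psi$ on the torus to get the $R^{3/2}$ factor also matches the paper's device of shifting $\psi$ so that $1\le\psi\le 2R+1$.

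The genuine gap is in the gradient estimate, which is the technical heart of the theorem. Your Bochner computation for $w=|\nabla u|^2$ applied to the \emph{linear} backward equation leaves the reaction term $-2(\nabla u)^{\top}(Db^1)\nabla u\le 2\|\nabla b^1\|_{\infty}w$, and closing this with Gr\"onwall gives $\|\nabla u(t)\|_{\infty}\lesssim e^{\|\nabla b^1\|_{\infty}(T-t)}$ (times $(T-t)^{-1/2}$ in the $L^{\infty}$ case) --- exponential in $T\|\nabla b^1\|_{\infty}$, not the claimed $\sqrt{T\|\nabla b^1\|_{\infty}}+1$ or $1+\sqrt{\|\nabla b^1\|_{\infty}}$. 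The stated bounds cannot be reached this way. The paper's route is to first normalize $\phi\ge 1$ (permissible since $\int d\lambda(t)=0$, so adding a constant to $\psi$ is free) and perform the Hopf--Cole transform $u=-2\log\phi$, turning the linear KBE into an HJB equation with the quadratic term $\tfrac12|\nabla u|^2$. The Bernstein comparison function $w=z-Cu$ (or $(T-t)z-Cu$) with $z=\tfrac12|\nabla u|^2$ then cannot have an interior maximum once $C\ge 2\|\nabla b^1\|_{\infty}$, because the quadratic Hamiltonian contributes the coercive term $\tfrac{C}{2}|\nabla u|^2$ that absorbs $\|\nabla b^1\|_{\infty}|\nabla u|^2$; the maximum is forced to the terminal time, yielding $|\nabla u|^2\lesssim(1+\|\nabla b^1\|_{\infty})\|v\|_{C^1}$ with no Gr\"onwall and hence no exponential. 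Undoing the transform produces the cubic power $\|\psi\|_{C^1}^3$ whose square root is the $R^{3/2}$ in \eqref{eq: D1Torus}; your sketch does not account for that exponent either. Everything else in your argument goes through once these HJB Bernstein estimates are in hand.
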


Equation \eqref{eq: L1-D1Estimate} is a particularly notable result as we bound the TV distance between $m^1(T)$ and $m^2(T)$ in terms of a weaker $\bd_1$ metric between $m_1$ and $m_2$. This is due to the regularizing effects of diffusion processes, which is showcased in the proof. See Section \ref{sec:WUPproofsketch} and Appendix \ref{sec:app:WUPproof} for full details.

\subsection{Robustness of errors under ESM}

We use WUP to produce generalization bounds when the only errors are due to the choice of the initial condition and the approximation of the score function with respect to the ESM objective.

\begin{thm}[ESM bounds]\label{thm: ESMBounds}
Let $\pi \in \mathcal{P}(\Omega)$ where $\Omega = R\T^d$ for some $R>0$. Moreover let $e_{nn}>0$. Assume that the learned score function $\bs_{\btheta}$ is such that $\cJ(\eta^\pi,\theta) \le e_{nn}$.
Then for $\mathbf{b}_{\btheta} = \bs_{\btheta}(T-t,x)$ the generated distribution $m_g(T)\approx \pi$ where 
\begin{equation}\label{eq: ESMTheoremGenerated}
        \pt m_g -\Delta m_g-\dive(m_g \mathbf{b}_\theta) = 0 \text{ in }(0,T]\times \Omega,\,\,\,\,m_g(0) = \unid \text{ in }\Omega,
\end{equation}
satisfies
\begin{equation}\label{estimate: D1ESMTheorem}
    \bd_1(\pi,m_g(T))\leq CR^{\frac{3}{2}}(1+\sqrt{\|\nabla \bs_{\btheta}\|_{\infty}})\Big(\underbrace{Re^{-\frac{\omega T}{R^2}}\bd_{1}\Big(\pi,\unid\Big)}_{\text{Error from choice of reference measure ($e_4)$}}+\underbrace{\sqrt{e_{nn}}}_{\text{Score function error ($e_3$)}}\Big).
\end{equation}
If the stronger estimate $\sup\limits_{0\leq t\leq T}\|\bs_{\btheta} - \nabla \log(\eta^{\pi})\|_{L^2(\eta^{\pi}(t))}^2\leq e_{nn}$,
holds, then 
\begin{equation} \label{eq: TVESMTheorem}
    \|m_g(T)- \pi\|_{L^1(\Omega)} \leq C(\sqrt{T\|\nabla \bs_{\btheta}\|_{\infty}}+1)\left( \frac{R^2e^{-\frac{\omega T}{R^2}}}{\sqrt{T}} \bd_{1}\Big(\pi,\unid\Big) + {\sqrt{Te_{nn}}}\right).
\end{equation}

Here, the constants $C,\omega$ depend only on the dimension $d$. 

\end{thm}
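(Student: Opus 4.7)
The plan is to instantiate the WUP theorem with two Fokker--Planck evolutions on $R\T^d$: one corresponding to the true reverse-time noising process and one to the SGM-generated process. Set $m^1(t) := m_g(t)$ with drift $b^1 := \mathbf{b}_\theta$, as in \eqref{eq: ESMTheoremGenerated}, with initial datum $m^1(0) = \unid$. Set $m^2(t) := \eta^\pi(T-t,\cdot)$, which, since $\eta^\pi$ solves the heat equation on $R\T^d$ (the noising process with $f=0$, $\sigma=\sqrt 2$), satisfies an equation of the form \eqref{eq:FPEevolution} with drift $b^2$ given by the exact reverse-time drift built from $\nabla\log\eta^\pi(T-t,\cdot)$ (up to the $\sigma^2$ factor), and with initial datum $m^2(0) = \eta^\pi(T,\cdot)$. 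By Anderson's reversal, $m^2(T) = \pi$, so that $\bd_1(m^1(T),m^2(T)) = \bd_1(m_g(T),\pi)$ and $\|m^1(T)-m^2(T)\|_{L^1} = \|m_g(T)-\pi\|_{L^1}$.

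Next, I translate the ESM hypothesis into the $L^2$ hypotheses of WUP. The ESM bound $\cJ(\eta^\pi,\btheta)\le e_{nn}$ is, after the change of variables $s=T-t$ and up to a universal constant from the $\sigma^2$ convention, exactly $\|b^2-b^1\|_{L^2(m^2)}^2 \le C e_{nn}$, which is the hypothesis of \eqref{eq: D1Torus} with $\varepsilon_2 \lesssim \sqrt{e_{nn}}$. Since $\|\nabla b^1\|_\infty = \|\nabla \bst\|_\infty$, invoking \eqref{eq: D1Torus} yields
\begin{equation*}
    \bd_1(m_g(T),\pi) \le CR^{3/2}(1+\sqrt{\|\nabla \bst\|_\infty})\bigl(\bd_1(\eta^\pi(T),\unid) + \sqrt{e_{nn}}\bigr).
\end{equation*}
For the TV bound \eqref{eq: TVESMTheorem}, the stronger sup-in-time hypothesis translates identically to the $\varepsilon_1$ hypothesis of \eqref{eq: L1-D1Estimate}, and applying that estimate with the same choice of $(m^1,m^2,b^1,b^2)$ gives the stated inequality once the initial-condition term is controlled.

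It remains to absorb the initial-condition piece into the claimed factor $Re^{-\omega T/R^2}\bd_1(\pi,\unid)$. This is a statement about the heat semigroup $P_T$ on $R\T^d$ contracting to its stationary distribution: using the spectral gap $\lambda_1\sim R^{-2}$ of $-\Delta$ and the duality $\int\phi\,\eta^\pi(T) = \int (P_T\phi)\,\pi$ together with $P_T\unid = \unid$, one writes, for any 1-Lipschitz test $\phi$,
\begin{equation*}
    \int\phi\,(\eta^\pi(T)-\unid) = \int (P_T\phi - \overline{P_T\phi})(\pi-\unid),
\end{equation*}
and bounds the zero-mean function $P_T\phi-\overline{P_T\phi}$ by combining the $L^2$ spectral decay $e^{-\omega T/R^2}$ with ultracontractivity of the heat kernel to go back to $L^\infty$/Lipschitz, the extra power of $R$ arising from the oscillation of 1-Lipschitz functions on the torus of radius $R$. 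Inserting this into the two displays above gives \eqref{estimate: D1ESMTheorem} and \eqref{eq: TVESMTheorem}.

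The main obstacle is bookkeeping rather than a single hard step: one must carefully match the $\sigma^2$ and sign conventions between the learned score $\bst$, the reverse-SDE drift, and the Fokker--Planck drift so that $\cJ(\eta^\pi,\btheta)$ maps exactly to the WUP hypothesis on $\|b^2-b^1\|_{L^2(m^2)}$, and the contraction estimate for $\bd_1(\eta^\pi(T),\unid)$ must produce the precise prefactor $Re^{-\omega T/R^2}$ (respectively $R^2 e^{-\omega T/R^2}/\sqrt T$ after the $1/\sqrt T$ factor in \eqref{eq: L1-D1Estimate}) to match the theorem's statement. Once these are in place, the theorem reduces to a direct application of WUP together with standard torus heat-kernel estimates.
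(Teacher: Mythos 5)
Your proposal is correct and follows essentially the same route as the paper: identify $m^1=m_g$ with drift $\mathbf{b}_\theta$ and $m^2(t)=\eta^\pi(T-t)$ with the true reverse drift, translate the ESM hypothesis into the $L^2$ hypotheses of Theorem \ref{thm: MainFPestimateUndreL2bounds}, and control the initial-condition term $\bd_1(\eta^\pi(T),\unid)$ by the long-time contraction of the heat semigroup on $R\T^d$ (the paper's Proposition \ref{prop: LongTime}, which it proves by iterating the heat-kernel gradient bound rather than via spectral gap plus ultracontractivity, but this is an equivalent way to obtain the same estimate).
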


Applying WUP for $b^1$ and $b^2$ to be the true and learned score function, respectively, with Proposition \ref{prop: LongTime} on the convergence of the noising process to the stationary measure immediately yields the estimates above. Notice that the TV estimate \eqref{eq: TVESMTheorem} is comparable to Theorem 2 of \cite{chen2022sampling}, which also assumes a uniform-in-time $L^2$-accurate score function. Again, notice that we are able to bound the TV distance between $m_g(T)$ and $\pi$ using the weaker $\bd_1$ distance between $\pi$ and $\unid$.

\subsection{Robustness of errors under DSM}

\label{subsec: dsmtheorem}

In practice, the score is learned through samples using the DSM objective with an early stopping parameter \cite{song2020score,song2020score_sde_pytorch}. SGM is effective at producing samples from distributions supported on (or near) low dimensional manifolds. Our $\bd_1$ generalization bound describes how early stopping aids in generalization.

\begin{thm}{(Pointwise DSM generalization)}
    \label{thm: MainTheorem}
    Let $\mathbf{b}_\btheta = \mathbf{s}_\theta(T-t,x)$ and $m_g:[0,T]\times R\T^d \to \R$ be given by \eqref{eq: ESMTheoremGenerated}. Assume the learned score function is such that 
$        J_{DSM}(\eta^{N,\epsilon},\theta) = \cJ(\eta^{N,\epsilon},\theta) < e_{nn}.$ Let $\delta>0$ be such that $\delta \le \hat{\pi}^{N,\epsilon}$, $\delta<\pi^\epsilon$. Then up to a dimensional constant $C = C(d) >0$, 
    \begin{align}
        \bd_1(\pi,m_g(T)) \lesssim \hspace{-10pt}\underbrace{\sqrt{\epsilon}}_{\text{Early stopping ($e_5$)}} \hspace{-10pt} + R^{3/2}(1+ \sqrt{\|\nabla\bs_\theta\|_\infty} )\bigg( \underbrace{Re^{-\frac{\omega T}{R^2}} \bd_1 \left(\pi,\unid \right)}_{\text{Choice of reference measure ($e_4$)}} + {\sqrt{e_{nn}'}}\bigg),
    \end{align}
    where 
    \begin{equation} 
         \sqrt{e_{nn}'} \lesssim \underbrace{\sqrt{e_{nn}}}_{\text{DSM score function error ($e_3$)}}+\underbrace{\sqrt{\left(1+\frac{|\log(\delta)|}{\sqrt{\epsilon}} +T\|\bs_{\btheta}\|_{C^2([0,T]\times \Omega)}^2\right)\underbrace{\bd_1(\pi^N,\pi)}_{\text{Finite sample error ($e_1$)}}}}_{\text{Choice of SM objective ($e_2$)}}.
    \end{equation}

\end{thm}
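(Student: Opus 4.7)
The plan is to decompose the error via the triangle inequality
\[
    \bd_1(\pi, m_g(T)) \le \bd_1(\pi, \pie) + \bd_1(\pie, m_g(T)),
\]
and handle each term separately. The first term captures the early-stopping contribution: since $\pie = \pi \star \rho_\epsilon$ is obtained from $\pi$ by running the noising SDE for time $\epsilon$, coupling $\pi$ with a Brownian increment of length $\epsilon$ gives $\bd_1(\pi,\pie) \lesssim \sqrt{\epsilon}$, producing the $\sqrt{\epsilon}$ term in the statement.

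For the second term I would invoke WUP (Theorem~\ref{thm: MainFPestimateUndreL2bounds}, estimate~\eqref{eq: D1Torus}) with the trained drift $b^1(t,x) = \bs_\theta(T-t,x)$ and initial condition $m_1 = \unid$ (so that $m^1 = m_g$), and with $b^2(t,x) = \nabla \log \etae(T-t,x)$ together with $m_2 = \etae(T,\cdot)$. Anderson's reversal then gives $m^2(t,\cdot) = \etae(T-t,\cdot)$, so $m^2(T) = \pie$ and, after a time-reversal change of variables, the WUP error term reduces to $\varepsilon_2^2 = \cJ(\etae,\theta)$. The smoothness of $\bs_\theta$ supplies the prefactor $CR^{3/2}(1+\sqrt{\|\nabla\bs_\theta\|_\infty})$, while the initial-data gap $\bd_1(\unid,\etae(T,\cdot))$ is controlled using the long-time convergence of the noising semigroup to the uniform measure (Proposition~\ref{prop: LongTime}) together with $\bd_1(\pie,\unid) \le \bd_1(\pi,\unid) + \sqrt{\epsilon}$, producing the $Re^{-\omega T/R^2}\bd_1(\pi,\unid)$ contribution.

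The crucial remaining step is to relate the inaccessible functional $\cJ(\etae,\theta)$ to the training bound $\cJ(\etaNe,\theta) \le e_{nn}$. I would apply the ESM--ISM integration-by-parts identity on both sides to write
\[
    \cJ(\etae,\theta) - \cJ(\etaNe,\theta) = \int_0^T\!\!\!\int \bigl(|\bs_\theta|^2 + 2\dive\bs_\theta\bigr)(\etae-\etaNe)\,\de x\,\de s \;+\; \bigl[I(\etae) - I(\etaNe)\bigr],
\]
where $I(\rho) = \int |\nabla\log\rho|^2\,\de\rho$. The first integral is controlled by Kantorovich--Rubinstein duality using $\|\nabla(|\bs_\theta|^2 + 2\dive\bs_\theta)\|_\infty \lesssim \|\bs_\theta\|_{C^2}^2$ together with the Lipschitz stability of heat convolution $\bd_1(\etae,\etaNe) \le \bd_1(\pi,\pi^N)$, yielding the $T\|\bs_\theta\|_{C^2}^2\,\bd_1(\pi,\pi^N)$ piece of $e_{nn}'$. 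The Fisher-information difference is the delicate term: the lower bound $\delta \le \pie,\pihed$ prevents the $1/\rho$ in the score from blowing up (supplying the $|\log\delta|$ factor), while the short-time score estimate $\|\nabla\log\etae(s)\|_\infty \lesssim 1/\sqrt{s}$ valid for $s\ge\epsilon$ supplies the $1/\sqrt{\epsilon}$ factor. Collecting and combining with $\cJ(\etaNe,\theta) \le e_{nn}$ gives the stated form of $e_{nn}'$.

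The main obstacle I anticipate is precisely this last score-stability step: $\nabla\log\etae$ and $\nabla\log\etaNe$ can differ significantly in $L^\infty$ (especially as $\epsilon\to 0$), so carefully converting the empirical $\bd_1(\pi,\pi^N)$ gap into a usable $L^2$ score-difference bound requires combining the density lower bound $\delta$ with short-time heat-kernel gradient estimates. Once this is in hand, plugging back into the WUP bound and adding the $\sqrt{\epsilon}$ early-stopping term completes the proof.
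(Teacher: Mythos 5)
Your overall architecture matches the paper exactly: the triangle-inequality split $\bd_1(\pi,m_g(T))\le\bd_1(\pi,\pie)+\bd_1(\pie,m_g(T))$, the $C\sqrt{\epsilon}$ bound for the mollification term, the application of WUP with $b^1=\mathbf{b}_\theta$, $m^1=m_g$ and $b^2=\nabla\log\etae(T-t)$, $m^2=\etae(T-t)$, Proposition~\ref{prop: LongTime} for the initial-data gap, and the ESM--ISM decomposition $\cJ=\cJ_L+2\|\nabla\sqrt{\rho}\|_2^2$ with the $\cJ_L$ difference controlled by Kantorovich duality and $\|\bs_\theta\|_{C^2}^2$. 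All of that is correct and is precisely the paper's route (Theorem~\ref{thm: DSMtoESM} and Proposition~\ref{prop: C_{ISM}}).

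The genuine gap is the Fisher-information difference $2\bigl(\|\nabla\sqrt{\etae}\|_2^2-\|\nabla\sqrt{\etaNe}\|_2^2\bigr)$, which you correctly flag as the delicate term but do not actually resolve; the mechanism you sketch is not the one that works. A direct comparison of $\nabla\log\etae$ and $\nabla\log\etaNe$ using the lower bound $\delta$ and a short-time score estimate would give factors like $1/\delta$ (from $|\nabla\log\rho|\le|\nabla\rho|/\delta$), not $|\log\delta|$, and the estimate $\|\nabla\log\etae(s)\|_\infty\lesssim 1/\sqrt{s}$ is itself not available without further assumptions. The paper's key move (Proposition~\ref{prop: ConnectionOfESMandISMProp}, part 2) is the entropy-production identity
\begin{equation*}
2\|\nabla\sqrt{\rho}\|_2^2=\int_\Omega m_0\log m_0\,dx-\int_\Omega\rho(T)\log\rho(T)\,dx,
\end{equation*}
which converts the time-integrated Fisher information into entropy boundary terms at $s=0$ and $s=T$, so that no pointwise comparison of score functions is ever needed. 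The entropy differences are then bounded in Lemma~\ref{lem: TechnicalBoundPihedPi}: the map $x\mapsto x\log x$ is $(1+|\log\delta|)$-Lipschitz on $[\delta,\infty)$ intersected with the relevant range (this is where $|\log\delta|$ enters), and the $L^1$ distance of the mollified densities satisfies $\|\pie-\pihed\|_{L^1}\lesssim\bd_1(\pi,\pi^N)/\sqrt{\epsilon}$ by the heat-kernel gradient estimate applied to the adjoint test function over the mollification window of length $\epsilon$ (this is where $1/\sqrt{\epsilon}$ enters). Without this identity, the step from $\cJ(\etaNe,\theta)\le e_{nn}$ to $\cJ(\etae,\theta)\le e_{nn}'$ with the stated constants does not go through.
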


The pointwise DSM generalization bound applies to every finite training sample of size $N$. A crucial part of this theorem relates the error in the practical DSM objective function to the ESM error needed to apply the WUP theorem. We connect the DSM objective early stopping to the ESM error with the mollified distribution $\pi^\epsilon$ in Lemma~\ref{thm: DSMtoESM}. This result is \emph{agnostic} to the manifold hypothesis for $\pi$, as long as $\epsilon >0$. Such bounds will blow up if the KL divergence were used instead. In fact, previous results that use the KL divergence to bound the TV distance \cite{chen2022sampling,lee2023convergence,chen2023improved,oko2023diffusion} will produce vacuous bounds for the $\bd_1$ distance under the manifold hypothesis as their $\bd_1$ generalization bounds are derived by bounding the KL divergence.  

\begin{rmk}[Density lower bound] Similar to \cite{oko2023diffusion}, our DSM generalization bound relies on a density lower bound assumption. We note however, that our DSM generalization bound holds for any random sample $\{z_i\}_{i = 1}^N \sim \pi$. This density lower bound assumption can be removed via Jensen's inequality if we consider the expected $\bd_1$ distance between $\pi$ and $m_g(T)$ over random empirical measures. See Theorem~\ref{thm: DSMAverage} and its proof in Appendix \ref{sec:app:DSMavg}.
\end{rmk}

\begin{rmk}[Trade-offs and memorization]\label{rmk:tradeoff}
Theorem~\ref{thm: MainTheorem} captures trade-offs when training SGMs and memorization effects. To minimize the error from early stopping, we can let $\epsilon \to 0$. However, empirically, without early stopping, SGMs overfit to the kernel approximation and memorize the training data \cite{li2024good,zhang2024wasserstein,pidstrigach2022score}. The bound is vacuous when $\epsilon = 0$ regardless of whether the distribution lies on a low-dimensional manifold. As $\epsilon \to 0$, training the DSM to be small implies that the score function must approximate a rough function with large Lipschitz constant, which will increase the bound. This shows that overtraining the DSM objective may not necessarily yield a better generative model. Moreover, suppose that $\pi^N = \pi$, then as $\epsilon \to 0$ and $e_{nn} \to 0$, we have that $\bd_1(\pi,m_g(T)) \to 0$, indicating the model {memorizes} the training data. Our results corroborates those of \cite{pidstrigach2022score}. 
\end{rmk}

\section{Regularity theory of Hamilton-Jacobi-Bellman PDEs enables uncertainty quantification in SGMs}

\label{sec:WUPproofsketch}
We provide a proof sketch for our Wasserstein Uncertainty Quantification theorem (Theorem \ref{thm: MainFPestimateUndreL2bounds}). Our strategy involves (1) constructing test functions for the IPMs using the Kolmogorov backward equation, (2) choosing the suitable function space for the terminal data depending on the desired IPM, and (3) obtaining gradient estimates of the test functions via Bernstein estimates. The theorem relies on the gradient of the test function remaining bounded for $t\in [0,T)$, which is guaranteed by the regularizing properties of diffusion processes. See \ref{sec:app:WUPproof} for full details about the proof.   

\subsection{Kolmogorov backward equation determines suitable test functions }

From \eqref{eq:FPEevolution}, we aim to compute bounds for $\bd(m^1(T),m^2(T)) = \sup_{\psi(x) \in \mathcal{X}} \int \psi (m^1(T) - m^2(T)) dx$. The measure $\lambda = m^1 - m^2$ satisfies the PDE
\begin{equation}
  \label{eq: DifferenceMeasuresIntro}
        \pt \lambda -\Delta \lambda - \dive( \lambda b^1 + m^2(b^1-b^2)) =0 \text{ in }(0,T)\times \Omega, \,\,\,\,   \lambda (0) = m_2-m_1 \text{ in }\Omega.
\end{equation}

Let $\phi:[0,T]\times \Omega \to \R$ be a test function in space and time. We integrate in space and time against the PDE \eqref{eq: DifferenceMeasuresIntro} and integrate by parts to move the derivatives on to $\phi$ which yields

\begin{align}
\label{eq: FundamentalDualityIntro1}
 \int_\Omega \lambda(T,x) \phi(T,x) - \lambda(0,x) \phi(0,x) dx &+ \int_0^T \int_\Omega \lambda \left(-\partial_t \phi -  \Delta \phi +b^1 \cdot \nabla \phi  \right) dx dt \\
 &+\int_0^T\int_\Omega m^2  \nabla \phi\cdot (b^1-b^2)  dx dt  = 0\nonumber 
\end{align}
Notice that if we choose the test function $\phi$ to satisfy the \emph{Kolmogorov backward equation} (KBE) 
\begin{align}
\label{eq: TestFunctionIntro}
        -\pt \phi -\Delta \phi + b^1 \cdot \nabla\phi = 0 \text{ in }[0,T)\times \Omega,\,\,\,\,
        \phi(T,x) = \psi(x) \text{ in }\Omega
\end{align}
with terminal condition $\psi \in \mathcal{X}$, then from \eqref{eq: FundamentalDualityIntro1}, we have 
  \begin{equation}\label{eq: FundamentalDualityIntro}
        \int_{\Omega} \lambda(T,x)\psi(x)dx = \int_{\Omega} \lambda(0,x)\phi(0,x)dx +\int_0^{T} \int_{\Omega} m^2(t) \nabla \phi(t, x) \cdot (b^2-b^1)(t)dxdt.
    \end{equation}
The equality above is valid for any terminal condition $\psi$. Depending on the choice of function space $\mathcal{X}$ for $\psi$, we obtain bounds on different IPMs. Taking the supremum over $\mathcal{X}$ we have
\begin{align}\label{eq: twoterms}
    \bd(m^1(T),m^2(T)) \le \sup\limits_{\psi\in \mathcal{X}}\left|\int_{\Omega} \lambda(0,x)\phi(0,x)dx \right| +\sup\limits_{\psi\in \mathcal{X}} \left| \int_0^{T} \hspace{-5pt}\int_{\Omega} m^2 \nabla \phi \cdot (b^2-b^1) dxdt \right|.
\end{align}

    Recall that $\phi$ is related to $\psi$ via the KBE \eqref{eq: TestFunctionIntro}
To obtain bounds for $\bd$, we need to bound the two terms in \eqref{eq: twoterms}, which depend on the choice of function space $\mathcal{X}$ and assumptions on the drift terms.

\subsection{IPM bounds depend on choice of terminal function space and gradient estimates}

We split our proof sketch into two parts; the first part focuses on deriving $L^1$ estimates, while the second derives $\bd_1$ estimates.

\paragraph{$L^1$ estimates.}
We first note that because of the \emph{regularizing} properties of \eqref{eq: TestFunctionIntro} we can obtain bounds on $\int_\Omega \lambda(0,x) \phi(0,x) dx$ with weaker norms on $\phi$. Notice that 

    \begin{align}\label{eq: regularizingduality}
         \int_{\Omega}  \lambda(0,x)\phi(0,x) dx \leq \|\lambda(0)\|_{\mathcal{Y}'}\|\phi(0)\|_{\mathcal{Y}},\,
    \end{align}
    where $\mathcal{Y}$ denotes a generic space of functions, and $\mathcal{Y}'$ is its dual. In what follows, we show that regularizing effects of \eqref{eq: TestFunctionIntro} takes functions in $\mathcal{X}$ to functions with more regularity $\mathcal{Y}$ such that $\mathcal{Y}$ is compactly embedded in $\mathcal{X}$. This in turn implies that $\|\cdot\|_{\mathcal{Y}'}$ is weaker than $\bd$ and so we are able to bound the stronger norm $\bd$ by the weaker norm.

\begin{itemize}
\item To prove \eqref{eq: L1-D1Estimate}, we $\bd = \|\cdot\|_{L^1(\Omega)}$, which corresponds with $\mathcal{X} = \{ \psi: \Omega \to \R, \|\psi\|_\infty \le 1\}$. Observe that we can take $\mathcal{Y} = \{\psi:\Omega\to \R, \|\nabla \psi\|_\infty \le 1 \}$, in which case we obtain
\begin{align} \label{eq: L1-D1-first}
    \int_\Omega \lambda(0,x) \phi(0,x) dx  = \| \nabla \phi(0,x) \|_\infty&\int_\Omega (m_1-m_2) \frac{\phi(0,x)}{\| \nabla \phi(0,x) \|_\infty} dx \\ &\le \bd_1(m_1,m_2) \|\nabla \phi(0,x) \|_\infty.\nonumber
\end{align}
Notice that this bound crucially depends on showing that $\phi(0,x)$ is Lipschitz.

\item To prove \eqref{eq: L1-L1Estimate}, we can simply take $\mathcal{X} = \mathcal{Y}$, and obtain
\begin{align}
    \int_\Omega \lambda(0,x) \phi(0,x) \le \|\lambda(0,x) \|_{L^1(\Omega)} \| \phi(0,x)\|_\infty.
\end{align}
\end{itemize}

For \eqref{eq: L1-D1Estimate} and \eqref{eq: L1-L1Estimate}, Cauchy-Schwarz on the spatial integral shows the second term can be bounded as
\begin{align}
     \int_0^{T} \int_{\Omega} m^2 \nabla \phi \cdot (b^2-b^1)dxdt  &\le \int_0^T   \| (b^1 - b^2)(t)\|_{L^2(m^2(t))} \| \nabla\phi(t,x)\|_{L^2(m^2(t))} dt \\
     &\le \sup_{0\le t\le T} \| (b^1 - b^2)(t)\|_{L^2(m^2(t))} \int_0^T \|\nabla\phi(t,\cdot) \|_\infty dt. \nonumber
\end{align}
Notice here that it is crucial to produce estimates for $\nabla\phi$. 

\paragraph{Wasserstein-1 ($\bd_1$) estimates.}To prove \eqref{eq: D1Torus}, we choose $\mathcal{X} = \mathcal{Y} =  \{\psi: \Omega \to \R, \|\nabla \psi\|_\infty \le 1\}$, the space of $1$-Lipschitz functions. We obtain \eqref{eq: L1-D1-first} again, except the terminal data $\psi$ also has Lipschitz constant equal to 1. Applying Cauchy-Schwarz in space and time implies the second term is bounded
\begin{align}
     \int_0^{T} \int_{\Omega} m^2 \nabla \phi \cdot (b^2-b^1)dxdt      &\le \left( \int_0^T \int_\Omega  |\nabla \phi|^2 m^2 dx dt \right)^{\frac{1}{2}} \| b^1 - b^2\|_{L^2(m^2)}  \\
     &\le T\sup_{0\le t\le T} \| \nabla \phi(t,\cdot )\|_\infty \| b^1 - b^2\|_{L^2(m^2)}. \nonumber
\end{align}
We again see that we require estimates for $\nabla \phi$, and we need them to be bounded.

\subsection{Bernstein estimates from HJB theory provide gradient estimates}

To obtain estimates for $\nabla \phi$, we could differentiate \eqref{eq: TestFunctionIntro} to derive a PDE for $\nabla \phi$. The resulting PDE, however, will have $\partial_t (\nabla \phi)$ grow linearly with $\nabla\phi$, and so if we apply (reverse) Gronwall's inequality, the resulting estimates for $\nabla \phi$ will grow exponentially in time. To avoid this exponential time dependence, we first perform a Hopf-Cole transform $u = -2\log \phi$ on \eqref{eq: TestFunctionIntro} to derive the HJB equation for $u$ \cite{evans2022partial}

\begin{equation}\label{eq: KBEHJB}
            -\pt u -\Delta u +\frac{1}{2}|\nabla u|^2 +b\cdot \nabla u = 0,\,\,\,\,u(T,x) = -2\log(\psi(x)).
    \end{equation}
Here we provide an example of classical Bernstein estimates (Proposition \ref{prop; HJBEstimates} and Corollary \ref{cor: LinearEstimates}) to obtain bounds for $\nabla \phi$ without using Gronwall's inequality \cite{tran2021hamilton}. The main idea is to derive a PDE \eqref{eq: zPDE} for the function $z = \frac{1}{2}|\nabla u|^2$ by taking the gradient of \eqref{eq: KBEHJB} and then taking the inner product with $\nabla u$. Then by showing that the function $w(t,x) = z - Cu$ for sufficiently large $C$ attains its maximum at $(T,x_0)$, we can show that
\begin{align}
    z(t,x) \le w(T,x_0) + C u(t,x) \le \frac{1}{2}|2\nabla\log \psi |^2 + C \|2 \log \psi \|_\infty + C\|u\|_\infty.
\end{align}
Using the maximum principle for \eqref{eq: TestFunctionIntro}, we find $\|u\|_\infty = \|2\log \psi\|_\infty$, yielding $z(t,x) \le C\|\log \psi\|_{C^1}$. Assuming that $\psi$ is Lipschitz continuous implies boundedness of $z$ and therefore $\nabla \phi$ for all time. A similar result holds when $\psi \in L^\infty$ only (see \eqref{eq: HJBGradinentBoundedTerminal}). Detailed proofs and related boundes are provided in Proposition \ref{prop; HJBEstimates}. Applications of the bounds to derive the WUP is provided in Appendix \ref{sec:app:WUPproof}.

\section{Proof sketches --- Score-based generative models are robust to errors}

\label{sec:SGMgeneralizationSketch}
We now provide sketches of the proofs for the main generalization bounds in Theorems \ref{thm: ESMBounds} and \ref{thm: MainTheorem}. The full proofs are provided in Appendix~\ref{sec:app:ESMproof} and \ref{sec:app:pointDSM}, respectively.

\subsection{Theorem \ref{thm: ESMBounds}: ESM generalization bound}

Theorem \ref{thm: ESMBounds} is a generalization bound with respect to error from the score function approximation with respect to the ESM objective ($e_3$) and the choice of reference measure ($e_4$). Assuming we have an (uniform-in-time) $L^2$-close approximation of the score function, first apply the WUP Theorem \ref{thm: MainFPestimateUndreL2bounds} with $m_1 = \unid$ and $m_2 = \eta^\pi(T,\cdot)$. The distance between $m_1$ and $m_2$ can be expressed in terms of $\pi$ by studying the long time behavior of periodic solutions to the heat equation. Proposition~\ref{prop: LongTime} shows the heat equation is a contraction under $\bd_1$. Applying it to $m^1$ and $m^2$ yields the desired result. 

While Theorem~\ref{thm: ESMBounds} has no explicit assumptions on $\pi$, the assumption that the approximate score function $\bs_\theta$ is close in $L^2$ to the true score implicitly implies regularity of $\pi$. Specifically, if $\bs_\theta$ is Lipschitz (which is true for neural networks in practice) and satisfies $\cJ(\pi,\theta)<e_{nn}$, then $\pi$ must have finite entropy. This implies that $\pi$ necessarily has a density. See Proposition~\ref{prop: ESMBoundImpliesDensity} for the formal statement and proof.

\subsection{Theorem \ref{thm: MainTheorem}: Pointwise DSM generalization bound}\label{subsec:PointwiseDSM}

In practice, the score function is learned via a Monte Carlo approximation of the DSM objective \eqref{eq: dsmobj}. To avoid overfitting to the kernel estimator and memorizing the dataset \cite{pidstrigach2022score}, the time integral is taken only over $s\in [\epsilon,T]$ where $\epsilon$ is the early stopping parameter. This is equivalent to training with the ESM objective with the true score replaced with the kernel approximation at time $\epsilon$ \cite{zhang2024minimax,li2024good}. To derive generalization bounds of SGMs trained via DSM, we establish the relationships between (1) the mollified distribution $\pi^\epsilon = \Gamma(\epsilon) \star \pi$ and the true distribution $\pi$, and (2) the DSM objective $J_{DSM}(\eta^{N,\epsilon},\theta) = \cJ(\eta^{N,\epsilon},\theta)$ and the ESM objective with respect to the score function of the mollified distribution $\cJ(\eta^{\pi^\epsilon},\theta)$. Formally, this strategy involves bounding the following two terms
\begin{align}
    \underbrace{\bd_1(\pi, m_g(T))}_\text{Generalization bound w.r.t true distribution} \le \underbrace{\bd_1(\pi,\pi^\epsilon)}_\text{Early stopping error $(e_5)$} + \underbrace{\bd_1(\pi^\epsilon,m_g(T)).}_\text{Generalization bound w.r.t. mollified distribution}
\end{align}
For the early stopping error, we use the regularizing properties of the heat equation to show that $\bd_1(\pi,\pi^\epsilon) \le C\sqrt{\epsilon}$, where $C$ only depends on the dimension $d$. This implies that, measured in $\bd_1$, early stopping only incurs a nominal $C\sqrt{\epsilon}$ error \emph{even if the $\pi$ does not admit a density}. This result would not possible if we were to study generalization error in terms of KL or TV directly. 

A bound for the second term is obtained by comparing the ESM objective value between the learned score function and the true score function of the early stopped distribution, $\cJ(\eta^{\pi^\epsilon},\theta)$ to the DSM objective $\cJ(\eta^{N,\epsilon},\theta)$. We present Theorem \ref{thm: DSMtoESM} and its corollary, which under the assumption that $\eta^{N,\epsilon}$ and $\eta^{\pi^\epsilon}$ have a lower bound $\delta$, state that if $\cJ(\eta^{N,\epsilon},\bs_\btheta)<e_{nn}$, then $\cJ(\eta^{\pi^\epsilon},\bs_\btheta)<e_{nn}'$ with
\begin{align}\label{eq: DSMtoESMerror}
     e_{nn}' = e_{nn}+C\Big(1+\frac{|\log(\delta)|}{\sqrt{\epsilon}} +\frac{1}{\sqrt{T}}+T\|\bs_{\btheta}\|_{C^2([0,T]\times \Omega)}^2\Big)\bd_1(\pi^N,\pi).
\end{align}
The main idea behind Theorem~\ref{thm: DSMtoESM} is to note that the difference between the ESM and DSM objective functions can be written as a difference in ISM objective functions plus the entropy difference between $\eta^{N,\epsilon}$ and $\eta^{\pi^\epsilon}$. Details for bounding this term is provided in Proposition~\ref{prop: C_{ISM}} and Lemma~\ref{lem: TechnicalBoundPihedPi}.

To arrive at the final result, apply the WUP theorem to derive generalization bounds for $\bd_1(\pi^\epsilon,m_g(T))$ under the assumption that $\cJ(\eta^{\pi^\epsilon},\theta)<e_{nn}'$, along with \eqref{eq: DSMtoESMerror}. Finally, combine this result with the error due to the early stopping $\bd_1(\pi,\pi^\epsilon)$. Full details of this proof is provided in Appendix~\ref{sec:app:pointDSM}.

\section{Average DSM generalization }\label{sec:app:DSMavg}
Using the previous results we can in fact show a result about the average error in Theorem \ref{thm: DSMtoESM}. The main observation here is that by Jensen's inequality when we take expectation with respect to the sample $\pi^N$ we no longer require a lower bound on our densities. However, as we will see we require a very restrictive assumption on the norm of our score function approximation
\begin{thm}\label{thm: DSMAverage}{(Average DSM generalization)}
    Let $e_{nn},A>0$ and assume that for each sample $\hpiN$ from $\pi$ there exists a $\bs_{\btheta^*}$ such that 
    \[J(\etaNe,{\btheta^*}) \leq e_{nn},\]
    with 
    \begin{equation}
        \|\bs_{\btheta^*}\|_{C^2} \leq A.
    \end{equation}
    Let $m_g(T)$ be the generated distribution from $\hpiN$ (which is also random since $\hpiN$ is random). Let $C>0$ be the dimensional constant appearing in Proposition \ref{prop: LongTime} and $T\geq R^2$ large enough such that
       \[2CR^{-d}e^{-\frac{\omega T}{R^2}} \leq \frac{1}{2}\unid , \]
       then
        \[\mathbf{E}\Big[\bd_1(\pi,m_g(T)) \Big] \leq CR^{\frac{3}{2}}(1+\sqrt{A})(R^2e^{-\frac{\omega T}{R^2}}+\sqrt{T e_{nn}'}),\]
    where
    \[e_{nn}' = \epsilon+ e_{nn}+\frac{C}{N^{\frac{1}{2d}}}\Big(TRA^2 +\frac{1}{\sqrt{T}}(1+d\log(R)) \Big).\]
\end{thm}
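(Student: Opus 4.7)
The plan is to import the pointwise generalization bound from Theorem \ref{thm: MainTheorem} into an expectation, using Jensen's inequality to absorb square roots and the uniform $C^2$ bound $\|\bs_{\btheta^*}\|_{C^2}\leq A$ to sharpen the DSM-to-ESM conversion of Theorem \ref{thm: DSMtoESM} into a version that no longer requires a pointwise density lower bound $\delta$.

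First I would repeat the splitting of Section \ref{subsec:PointwiseDSM}:
\[
    \bd_1(\pi, m_g(T)) \leq \bd_1(\pi,\pi^{\epsilon}) + \bd_1(\pi^{\epsilon}, m_g(T)) \lesssim \sqrt{\epsilon} + \bd_1(\pi^{\epsilon}, m_g(T)),
\]
where $\bd_1(\pi,\pi^{\epsilon})\lesssim\sqrt{\epsilon}$ follows from the regularizing properties of the heat equation (as in the sketch of Theorem \ref{thm: MainTheorem}). Applying WUP (Theorem \ref{thm: MainFPestimateUndreL2bounds}) with $b^1$ equal to the time-reversed true score of $\eta^{\pi^{\epsilon}}$ and $b^2=\bs_{\btheta^*}(T-\cdot,\cdot)$, together with Proposition \ref{prop: LongTime} and the trivial estimate $\bd_1(\pi^{\epsilon}, 1/\mathrm{vol}(R\T^d))\leq CR$, reduces the problem to controlling $\cJ(\eta^{\pi^{\epsilon}},\btheta^*)$ in terms of the given DSM error $\cJ(\eta^{N,\epsilon},\btheta^*)\leq e_{nn}$, and yields the pointwise estimate
\[
    \bd_1(\pi^{\epsilon}, m_g(T)) \lesssim R^{3/2}(1+\sqrt{A})\bigl(R^2 e^{-\omega T/R^2} + \sqrt{T\,\cJ(\eta^{\pi^{\epsilon}},\btheta^*)}\bigr).
\]

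The key modification over the pointwise argument is a density-free DSM-to-ESM bound. Using the identity $\cJ(\eta,\btheta)=\cJ_L(\eta,\btheta)+\int_0^T\!\!\int|\nabla\log\eta|^2 d\eta\,ds$, I would split $\cJ(\eta^{\pi^{\epsilon}},\btheta^*)-\cJ(\eta^{N,\epsilon},\btheta^*)$ into a linear-in-measure piece and a Fisher information difference. For the linear piece, the $C^2$ bound makes $|\bs_{\btheta^*}|^2+2\dive\bs_{\btheta^*}$ Lipschitz with constant $\lesssim A^2$, so Kantorovich-Rubinstein duality combined with the $\bd_1$-contraction of the heat equation gives a bound $\lesssim TRA^2\,\bd_1(\pi,\pi^N)$. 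For the Fisher information difference, where the problematic $|\log\delta|/\sqrt{\epsilon}$ term appeared in Theorem \ref{thm: DSMtoESM}, I would instead use the entropy-dissipation identity $\int_0^T I(\eta(s))\,ds=H(\eta(0))-H(\eta(T))$ valid along the heat flow: since both $\eta^{\pi^{\epsilon}}(T)$ and $\eta^{N,\epsilon}(T)$ approach the uniform density on $R\T^d$ (entropy $-d\log R$), the integrated Fisher informations admit a deterministic bound $\lesssim 1+d\log R$, and a Cauchy-Schwarz split in time then distributes this as $\frac{1+d\log R}{\sqrt{T}}\,\bd_1(\pi,\pi^N)$.

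Finally, I would take expectation over the random sample $\hpiN$ and apply Jensen's inequality $\mathbf{E}[\sqrt{X}]\leq\sqrt{\mathbf{E}[X]}$ to the $\sqrt{T\,\cJ(\eta^{\pi^{\epsilon}},\btheta^*)}$ factor. The Fournier--Guillin estimate $\mathbf{E}[\bd_1(\pi,\pi^N)]\lesssim N^{-1/d}$ on the torus then produces $\mathbf{E}[\sqrt{\bd_1(\pi,\pi^N)}]\lesssim N^{-1/(2d)}$, and collecting the deterministic constants gives the claimed bound with
\[
    e_{nn}' = \epsilon + e_{nn} + \frac{C}{N^{1/(2d)}}\Bigl(TRA^2 + \frac{1+d\log R}{\sqrt{T}}\Bigr).
\]
The main obstacle is precisely the density-free Fisher information estimate: Theorem \ref{thm: DSMtoESM} controlled differences of the form $\int\log(\eta^{N,\epsilon}/\eta^{\pi^{\epsilon}})\,d(\cdot)$ through a pointwise lower bound $\delta$ on $\hpiN\star\rho_{\epsilon}$, and replacing this by an entropy-dissipation argument depending only on $d$, $R$, and the heat-flow structure -- while remaining sharp enough to recover the stated rate -- is where the restrictive uniform $C^2$ assumption on $\bs_{\btheta^*}$ becomes essential.
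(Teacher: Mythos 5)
Your overall architecture matches the paper's: split off the $\sqrt{\epsilon}$ mollification error, apply WUP together with Proposition \ref{prop: LongTime}, convert the DSM bound into an ESM bound via the identity $\cJ = \cJ_L + 2\|\nabla\sqrt{\eta}\|_2^2$, control the linear piece by Kantorovich--Rubinstein duality and the $\bd_1$-contraction of the heat flow, and finish with Jensen on the square root plus the $N^{-1/d}$ empirical rate. However, there is a genuine gap in your treatment of the Fisher information difference, which is exactly the step the theorem exists to fix. By Proposition \ref{prop: ConnectionOfESMandISMProp}, the difference of integrated Fisher informations equals
\begin{equation*}
2\bigl(\|\nabla\sqrt{\etae}\|_2^2-\|\nabla\sqrt{\etaNe}\|_2^2\bigr)
=\bigl[H(\pie)-H(\pihed)\bigr]-\bigl[H(\etae(T))-H(\etaNe(T))\bigr],
\qquad H(\rho)=\int\rho\log\rho .
\end{equation*}
Your proposed ``deterministic bound $\lesssim 1+d\log R$'' on the integrated Fisher informations is false: $\int_0^T I(\etaNe(s))\,ds = H(\pihed)-H(\etaNe(T))$, and the \emph{initial} entropy $H(\pihed)$ of the mollified empirical measure is not bounded in terms of $d$ and $R$ --- it blows up like $d|\log\epsilon|$ as $\epsilon\to0$, and it cannot be compared to $H(\pie)$ through $\bd_1(\pi,\pi^N)$ without the very density lower bound $\delta$ you are trying to remove (that comparison is where $|\log\delta|/\sqrt{\epsilon}$ came from in Theorem \ref{thm: DSMtoESM}). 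Your Cauchy--Schwarz-in-time remark only touches the terminal entropies and does not address $H(\pie)-H(\pihed)$ at all.

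The missing idea is the one the paper's remark announces: Jensen's inequality applied to the \emph{entropy}, not merely to the square root. Since $\mathbf{E}[\pihed]=\pie$ pointwise and $x\mapsto x\log x$ is convex, $\mathbf{E}[H(\pihed)]\ge H(\mathbf{E}[\pihed])=H(\pie)$, so $\mathbf{E}\bigl[H(\pie)-H(\pihed)\bigr]\le 0$ and the problematic term is simply discarded \emph{in expectation} --- no lower bound on the density and no quantitative control of the initial entropy difference is needed. The terminal entropy difference is then handled as in Lemma \ref{lem: TechnicalBoundPihedPi}, estimate \eqref{eq: EntropyDistanceetahedeta}, using the assumption $2CR^{-d}e^{-\omega T/R^2}\le\frac12\unid$ to pin $\etaNe(T)$ between $\frac12\unid$ and $2\unid$, which yields the factor $\frac{C}{\sqrt{T}}(1+d\log R)\,\mathbf{E}[\bd_1(\pi,\hpiN)]$. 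With that substitution the rest of your argument (Proposition \ref{prop: C_{ISM}} for the linear piece, Jensen for the square root, Fournier--Guillin for $\mathbf{E}[\bd_1(\pi,\hpiN)]$) goes through and recovers the stated bound.
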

\begin{rmk}
    It is important to emphasize in the above that we are assuming that we may pick both an error $e_{nn}>0$ and a bound $A>0$ on the gradient of $\bs_{\btheta}$ globally for all random samples $\hpiN$. Moreover, $\bs_{\btheta}$ is a weak approximation of $\nabla \log(\etaNe)$ which although smooth for all $\epsilon>0$ it does exhibit a blow-up as $\epsilon>0$ tends to zero. Therefore, since by triangular inequality up to a constant $C=C(R,d)>0$
    \[CA\geq \|\bs_{\btheta}\|_{C^2}\geq  \|\bs_{\btheta}\|_{L^2(\etaNe)} \geq \|D\sqrt{\etaNe}\|_{L^2}-e_{nn}\]
    the constant $A>0$ will also exhibit a blow-up which can make the approximation worse. Therefore there could be a potential trade-off between NN approximation $e_{nn}$ becoming small and the value of the norm $A$.
    
\end{rmk}
First we require a preliminary lemma.
\begin{lem} \label{lem: averageism}
    Under the same assumptions as in Theorem \ref{thm: DSMAverage} we have that 
    \begin{equation}
        \mathbf{E}\Big[\cJ(\etae,\bs_{\btheta}^N) \Big] \leq e_{nn}+\frac{C}{N^{\frac{1}{d}}}\Big(TA^2 +\frac{1}{\sqrt{T}}(1+d\log(R)) \Big).
    \end{equation}
\end{lem}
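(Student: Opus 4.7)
The plan is to reduce to the pointwise Theorem~\ref{thm: DSMtoESM} and then use Jensen's inequality to remove its density lower-bound assumption after averaging. Start from
\[
\E[\cJ(\etae,\bs_{\btheta})] = \E[\cJ(\etaNe,\bs_{\btheta})] + \E[\cJ(\etae,\bs_{\btheta}) - \cJ(\etaNe,\bs_{\btheta})],
\]
so that the first summand is at most $e_{nn}$ by hypothesis and only the expected difference remains. Using the identity $\cJ(\rho,\btheta) = \cJ_L(\rho,\btheta) + 2\|\nabla\sqrt{\rho}\|_2^2$ recalled in the background, split the difference into a linear-in-density ISM piece and a Fisher-information piece:
\[
\cJ(\etae,\bs_{\btheta}) - \cJ(\etaNe,\bs_{\btheta}) = [\cJ_L(\etae,\bs_{\btheta}) - \cJ_L(\etaNe,\bs_{\btheta})] + 2[\|\nabla\sqrt{\etae}\|_2^2 - \|\nabla\sqrt{\etaNe}\|_2^2].
\]
The ISM piece will be controlled by $\bd_1$-duality, the Fisher piece by the de Bruijn identity combined with Jensen's inequality.

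For the ISM piece, the uniform bound $\|\bs_{\btheta}\|_{C^2}\le A$ implies that $|\bs_{\btheta}|^2 + 2\nabla\cdot\bs_{\btheta}$ has $C^1$-norm bounded by $CA^2$, so $\bd_1$-duality combined with the $\bd_1$-contraction property of the heat semigroup on $R\T^d$ gives
\[
|\cJ_L(\etae,\bs_{\btheta}) - \cJ_L(\etaNe,\bs_{\btheta})| \le CA^2 \int_0^T \bd_1(\etae(s),\etaNe(s))\,ds \le CTA^2\, \bd_1(\pi,\pi^N).
\]
Taking expectation and invoking the Fournier--Guillin sample-complexity bound $\E[\bd_1(\pi,\pi^N)] \lesssim N^{-1/d}$ produces the $TA^2/N^{1/d}$ contribution of the claim.

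For the Fisher piece, apply the de Bruijn identity $\tfrac{d}{ds} H(\eta(s)) = -2\|\nabla\sqrt{\eta(s)}\|_2^2$ (with $H$ denoting the KL divergence from the uniform measure on $R\T^d$) to each trajectory and subtract:
\[
2\int_0^T [\|\nabla\sqrt{\etae(s)}\|_2^2 - \|\nabla\sqrt{\etaNe(s)}\|_2^2] ds = [H(\pie) - H(\pihed)] + [H(\etaNe(T)) - H(\etae(T))].
\]
Since $H$ is convex in its first argument and $\E[\pihed] = \pie$ by linearity of heat convolution, Jensen's inequality makes the first bracket nonpositive in expectation --- this is the key step that eliminates the pointwise density lower bound $\delta$ appearing in Theorem~\ref{thm: DSMtoESM}. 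For the second bracket, combine the WUP-style $L^1$-to-$\bd_1$ regularizing estimate \eqref{eq: L1-D1Estimate} with $b^1=b^2=0$ (which yields $1/\sqrt{T}$-type heat smoothing) with an entropy bound of order $d\log R$ reflecting the volume of $R\T^d$ to produce an upper bound of order $(1+d\log R)\,\bd_1(\pi,\pi^N)/\sqrt{T}$, which averages to $(1+d\log R)/(\sqrt{T}\, N^{1/d})$.

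The main obstacle is bounding the terminal-time entropy discrepancy $\E[H(\etaNe(T))] - H(\etae(T))$ from above: convexity of KL gives the \emph{wrong sign} here, so Jensen alone is insufficient and one must quantitatively exploit heat-flow smoothing on $R\T^d$ --- precisely the mechanism the WUP theorem is designed to formalize --- together with the boundedness of entropy on the torus. Assembling the ISM and Fisher bounds with the a priori bound $\cJ(\etaNe,\bs_{\btheta}) \le e_{nn}$ then yields the claim.
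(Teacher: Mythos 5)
Your proposal is correct and follows essentially the same route as the paper: the identical decomposition into the DSM term, the linear ISM difference (controlled via $\bd_1$-duality with the $C^2$ bound $A$ and the Fournier--Guillin rate), and the Fisher-information difference rewritten as initial- plus terminal-time entropy differences, with Jensen's inequality killing the initial bracket and the quantitative $1/\sqrt{T}$ heat-smoothing estimate together with the $\log(\unid)\sim d\log R$ bound handling the terminal bracket. The "obstacle" you flag at the end is precisely what the paper's Lemma~\ref{lem: TechnicalBoundPihedPi}, estimate \eqref{eq: EntropyDistanceetahedeta}, resolves via the convexity inequality $h(y)-h(x)\leq \nabla h(y)\cdot(y-x)$ and the assumption that $T$ is large enough for $\etaNe(T)$ to be comparable to the uniform density.
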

\begin{proof}
    We note that 
    \[\cJ(\etae,\bs_{\btheta^*}) =\cJ_L(\etae,\bs_{\btheta^*})+2\|D\sqrt{\etae}\|_2^2\]
    \[=\cJ(\eta^{N,\epsilon},\bs_{\btheta^*}) +2\Big(\|D\sqrt{\etae}\|_2^2-\|D\sqrt{\eta^{N,\epsilon}}\|_2^2\|\Big)+\Big(\cJ_L(\eta,\bs_{\btheta^*})-\cJ_L(\eta^{N,\epsilon},\bs_{\btheta^*}) \Big).\]
    By assumption we have the bound 
    \[\cJ(\eta^{N,\epsilon},\bs_{\btheta^*})\leq e_{nn}.\]
    By Proposition \ref{prop: ConnectionOfESMandISMProp} 
    \[2\Big(\|D\sqrt{\etae}\|_2^2-\|D\sqrt{\eta^{N,\epsilon}}\|_2^2\Big)\]
    \[= \int_{\Omega}\pie(x)\log(\pie(x))-\pihed(x)\log(\pihed(x))dx-\int_{\Omega}\etae(T)\log(\etae(T))-\eta^{N,\epsilon}\log(\eta^{N,\epsilon}(T))dx. \]
    Taking expectations with respect to the sample $\hpiN$ we get 
    \[\mathbf{E}\Big[\int_{\Omega}\pie(x)\log(\pie(x))- \pihed\log(\pihed)dx  \Big]  = \int_{\Omega}\pie(x)\log(\pie(x))- \mathbf{E}\Big[ \pihed(x)\log(\pihed(x))\Big]dx\]
    \[\leq \int_{\Omega}\pie(x)\log(\pie(x))- \mathbf{E}\Big[ \pihed(x)\Big]\log(\mathbf{E}\Big[ \pihed\Big])dx = 0.\]
    In the above we used Jensen's inequality along with the fact that 
    \[\mathbf{E}\Big[\pihed \Big] = \pie.\]
    Moreover, by Lemma \ref{lem: TechnicalBoundPihedPi} we have that 
    \begin{align*}\mathbf{E}\Big[\eta^{N,\epsilon}\log(\eta^{N,\epsilon}(T))-\int_{\Omega}\etae(T)\log(\etae(T))dx \Big] & \leq \frac{C}{\sqrt{T}}(1+d\log(R))\mathbf{E}\Big[\bd_1(\pi,\hpiN) \Big] \\ &\leq \frac{C}{\sqrt{T}N^{\frac{1}{d}}}(1+d\log(R)).\end{align*}
    Finally, from Proposition \ref{prop: C_{ISM}} we have
    \[\mathbf{E}\Big[\Big| \cJ_L(\eta,\bs_{\btheta^*})-\cJ_L(\eta^{N,\epsilon},\bs_{\btheta^*})\Big| \Big]\leq CT\mathbf{E}\Big[\|\bs_{\btheta}^N\|_{C^2}^2 \bd_1(\pi,\hpiN)\Big] \leq \frac{CT}{N^{\frac{1}{d}}}A^2\]
    Thus,
    \[\mathbf{E}\Big[\cJ(\etae,\btheta^*) \Big] \leq e_{nn}+\frac{C}{N^{\frac{1}{d}}}\Big(TA^2 +\frac{1}{\sqrt{T}}(1+d\log(R)) \Big),\]
    and the result follows.
\end{proof}

\begin{proof}
    By Theorem \ref{thm: MainFPestimateUndreL2bounds} we have that 
\[\bd_1(\pi,m_g(T)) \leq \sqrt{\epsilon}+\bd_1(\pie, m_g(T)).\]
And 
\begin{align*}\bd_1(m_g(T),\pie ) &= \bd_1(m_g(T),m^{\epsilon}(T)) \\ & \leq CR^{\frac{3}{2}}(1+\sqrt{\|Db^1\|_{\infty}})(\bd_1(m^{\epsilon}(0),\unid) + \sqrt{T}\|\mathbf{b}^{\pie}-\mathbf{b}_{\btheta^*}\|_{L^2(m^{\epsilon})}) \\
& \leq  CR^{\frac{3}{2}}(1+\sqrt{A})\left(R^2e^{-\frac{\omega T}{R^2}} + \sqrt{T\cJ(\etae, \btheta^*)} \right).\end{align*}
Taking expectations and using the Lemma above yields the result.
\end{proof}

\section{Proofs of main results}
Throughout the proofs we will make frequent use of the fact that if $\Gamma:[0,\infty) \times \Omega \rightarrow \R$ denotes the heat kernel on $\Omega = \R^d$ or $\Omega = R\T^d$, then there exists a dimensional constant $C=C(d)>0$ such that 
\begin{equation}
    \label{eq: HeatKernelGradientEstimate}
    |\nabla  \Gamma(t)\star g|\leq C\frac{\|g\|_{\infty}}{\sqrt{t}}.
\end{equation}
Furthermore, recall that constants are subject to change from line to line but always maintain a variable dependence as in the corresponding statement.

\subsection*{Theorem \ref{thm: MainFPestimateUndreL2bounds}}
\label{sec:app:WUPproof}

The proof of Theorem \ref{thm: MainFPestimateUndreL2bounds} follows the strategy outlined in Section \ref{sec:WUPproofsketch} and uses the different gradient estimates from the HJB equations provided in Section \ref{sec:HJBEstimates}.
\begin{proof}
    Let $\lambda = m^1-m^2$, which satisfies
    \begin{equation}\label{eq: DifferenceFP}
        \begin{cases}
            \pt \lambda -\Delta \lambda - \dive( \lambda b^1 + m^2(b^2-b^1)) =0 \text{ in }(0,T)\times R\T^d,\\
            \lambda (0) = m_1-m_2 \text{ in }R\T^d.
        \end{cases}
    \end{equation}
    For a fixed $\psi :\Omega \rightarrow \R$, for which we will specify bounds latter, let $\phi: [0,T]\times \Omega \rightarrow \R$ be given by 
    \begin{equation}\label{eq: TestFunctionPDE}
        \begin{cases}
            -\pt \phi -\Delta \phi + b^1 \cdot \nabla \phi = 0 \text{ in }[0,T)\times \Omega,\\
            \phi(T,x) = \psi(x) \text{ in }\Omega.
        \end{cases}
    \end{equation}
    Testing against $\phi$ in \eqref{eq: DifferenceFP} and using \eqref{eq: TestFunctionPDE}, we obtain
    \begin{equation}\label{eq: TestFunctionGeneralRelation}
        \int \lambda(T,x)\psi(x)dx = \int \lambda(0)\phi(0,x)dx -\int_0^{T} \int m^2(s) \nabla \phi(s) \cdot (b^2-b^1)(s)dxds.
    \end{equation}
    \begin{itemize}
        \item Proof of \eqref{eq: L1-D1Estimate} and \eqref{eq: L1-L1Estimate}: Assume that $\|\psi\|_{\infty} \leq 1 $ which then implies 
        \[-1 \leq \phi \leq 1.\]
        Since $\int d\lambda (t) = 0$ for all $t\in [0,T]$ up to adding a constant we can assume without loss of generality in \eqref{eq: TestFunctionGeneralRelation}, that
        \[1\leq \psi \leq 3 \text{ and }1 \leq \phi \leq 3.\]
        Then from estimate \eqref{eq: LinearDIffusionTerminalBounded} in Corollary \ref{cor: LinearEstimates}, we have 
        \[(T-t)\|\nabla \phi(t)\|_{\infty}^2 \leq C (T\|\nabla b\|_{\infty}+1)  \]
        \[ \implies \|\nabla \phi(t)\|_{\infty} \leq  C\frac{\sqrt{T\|\nabla b^1\|_{\infty}+1}}{\sqrt{T-t}}.\]
        Next, we note that 
        \[\int \lambda(T,x)\psi(x)dx = \int \lambda(0)\phi(0,x)dx -\int_0^{T} \int m^2(s) \nabla \phi(s) \cdot (b^2-b^1)(s)dxds\]
 \[ \leq \bd_1(m^1,m^2)\|\nabla \phi (0)\|_{\infty} + \int_0^T \Big( \int |\nabla \phi (s)|^2 m^2(s) dx\Big)^{\frac{1}{2}}\Big( \int m^2(s)|b^2-b^1|^2(s)dx\Big)^{\frac{1}{2}}ds \]
 \[\leq \bd_1(m^1,m^2)\|\nabla \phi (0)\|_{\infty} + \sup\limits_{0\leq t\leq T}\|(b^2-b^1)(t)\|_{L^2(m^2(t))}\int_0^T \|\nabla \phi(s)\|_{\infty} ds,\]
 \[\leq C(\sqrt{T\|\nabla b^1\|_{\infty}}+1)\Big(\frac{\bd_1(m^1,m^2)}{\sqrt{T}} +\sqrt{T}\sup\limits_{0\leq t\leq T}\|(b^2-b^1)(t)\|_{L^2(m^2(t))} \Big).\]
 Taking the supremum over $\|\psi\|_{\infty} \leq 1$ yields 
\begin{align*}[\|m^2(T)-m^1(T)\|_{L^1(\Omega)} \leq C(&\sqrt{T\|\nabla b^1\|_{\infty}}+1)\times \\ &\left(\frac{\bd_1(m^1,m^2)}{\sqrt{T}} +\sqrt{T}\sup\limits_{0\leq t\leq T}\|(b^2-b^1)(t)\|_{L^2(m^2(t))} \right).\end{align*}
 Finally, if instead we used the bound
 \[\int\lambda(0)\phi(0,x)dx \leq \|\lambda(0)\|_{L^1(\Omega)}\|\phi\|_{\infty} \leq C\|m_1-m_2\|_{L^1(\Omega)}\]
 we obtain \eqref{eq: L1-L1Estimate}.
 \item Proof of \eqref{eq: D1Torus}: Let $\Omega = R\T^d$ and consider a $\psi$ with $\|\nabla \psi\|_{\infty} \leq 1$. Without loss of generality we can assume $\psi(0) = R+1$ and thus
    \[ \psi (x) = \psi(x) - \psi(0)+\psi (0) \geq -|x-0|+R+1 \geq -R+R+1 = 1\]
    and similarly
    \[\psi(x)\leq 2 R+1.\]
    Therefore by maximum principle we have that 
    \[ 1\leq \phi (t,x)\leq 2R+1,\]
    and so 
    \[\|\psi\|_{C^1} = \|\psi\|_{\infty}+\|\nabla \psi\|_{\infty} \leq C(1+R).\]
    From \eqref{eq: TestFunctionGeneralRelation} we have
    \[\int \lambda(T,x)\psi(x)dx \leq \bd_1(m_1,m_2)\|\nabla \phi(0)\|_{\infty} + \sup\limits_{0\leq t\leq T}\|\nabla \phi(t)\|_{\infty}\|b^1-b^1\|_{L^2(m^2)}.\]
    Thus by Corollary \ref{cor: LinearEstimates}, estimate \eqref{eq: LinearDIffusionTerminalLipschitzNoTime} we obtain 
    \[\int \lambda(T,x)\psi(x)dx \leq C\sqrt{(1+\|\nabla  b\|_{\infty})\|v\|_{C^1}^3}(\bd_1(m_1,m_2)+\|b^1-b^1\|_{L^2(m^2)})\]
    \[ \leq CR^{\frac{3}{2}}(1+\sqrt{\|\nabla b^1\|_{\infty}})\Big( \bd_1(m_1,m_2)+\|b^1-b^1\|_{L^2(m^2)}\Big),\]
    which after taking the supremum over $\|\nabla \psi\|_{\infty} \leq 1$ yields \eqref{eq: D1Torus}.
    \end{itemize}
\end{proof}
\subsection*{Theorem \ref{thm: ESMBounds}}\label{sec:app:ESMproof}

The proof of Theorem \ref{thm: ESMBounds} follows by an application of Theorem \ref{thm: MainFPestimateUndreL2bounds}.
\begin{proof}
    Let $b^1 = \bs_{\btheta}, m_1 = \unid$ and $b^2 = \nabla \log(\eta^{\pi}), m_2 = \eta^{\pi}(T)$. Note that if $m^i,i=1,2$ solve
    \begin{equation}
        \begin{cases}
            \pt m^i -\Delta m^i -\dive(m^i b^i) = 0 \text{ in }[0,T]\times R\T^d,\\
            m^i(0)= m_i \text{ in }[0,T]\times R\T^d,
        \end{cases}
    \end{equation}
    then $m^1 = m_g, m^2(t) = \eta^{\pi}(T-t)$. Thus, by an application of Theorem \ref{thm: MainFPestimateUndreL2bounds} we obtain 
    \[\bd_1(m_g(T),\pi) \leq CR^{\frac{3}{2}}(1+\sqrt{\|\nabla \bs_{\btheta}\|_{\infty}})(\bd_1(\unid, \eta^{\pi})+\|\bs_{\btheta}-\nabla \log(\eta^{\pi})\|_{L^2(\eta^{\pi})})\]
    \[\leq CR^{\frac{3}{2}}(1+\sqrt{\|\nabla \bs_{\btheta}\|_{\infty}})\Big((Re^{-\frac{\omega T}{R^2}}\bd_{1}\Big(\pi,\unid\Big)+\sqrt{e_{nn}}\Big),\]
    where in the last inequality we used Proposition \ref{prop: LongTime}. The second claim follows again by Theorem \ref{thm: MainFPestimateUndreL2bounds} and the fact that 
    \[\bd_1(\nu_1,\nu_2) = \sup\limits_{\|\nabla g\|_{\infty}\leq 1}\int_{R\T^d} g d(\nu_1-\nu_1) \leq R \|\nu_2-\nu_1\|_{L^1}.\]
\end{proof}
\subsection*{Theorem~\ref{thm: MainTheorem}}\label{sec:app:pointDSM}
The proof of Theorem \ref{thm: MainTheorem} follows the same strategy as in Theorems \ref{thm: MainFPestimateUndreL2bounds} and \ref{thm: ESMBounds}. The main technical difference is that we only have a bound between our generated score function $\bs_{\btheta}$ and the score function generated by the sample $\nabla \log(\etaNe)$ of the form
\[\int_0^T \int |\bs_{\btheta} - \nabla \log(\etaNe)|^2 d\etaNe(s)ds \leq e_{nn} \]
and so we need to produce a bound between $\bs_{\btheta}$ and the true score function $\nabla \log(\etae)$. This last step is shown in Section \ref{sec:ScoreMatchingFunctionalsAnalysis} and in particular Theorem \ref{thm: DSMtoESM}.
\begin{proof}[Proof of Theorem \ref{thm: MainTheorem}]
Consider the function $\eta^{\pi,\epsilon}:[0,T]\times R\T^d\rightarrow \R$ given by \begin{equation}
    \begin{cases}
        \pt \eta^{\pi,\epsilon}-\Delta \eta^{\pi,\epsilon} = 0 \text{ in }(0,T)\times R\T^d,\\
        \eta^{\pi,\epsilon}(0) = \pie \text{ in }R\T^d.
    \end{cases}
\end{equation}
Define the drifts 
\[\mathbf{b}_{\btheta^*}(t,x):=\bs_{\btheta^*}(T-t,x)\] 
and 
\[\mathbf{b}^{\pie}(t,x):=\nabla \log(\eta^{\pie})(T-t,x),\]
and let $m^{\epsilon}(t,x) = \eta^{\pi,\epsilon}(T-t,x)$ which satisfies
\begin{equation}
    \begin{cases}
        \pt m^{\epsilon} -\Delta m^{\epsilon} - \dive(m^{\epsilon}\mathbf{b}^{\pie}(t,x)) = 0 ,\\
       m^{\epsilon}  = \eta^{\pi,\epsilon}(T,x).
    \end{cases}
\end{equation}
Finally, we consider the distribution of our generated sample $m_g:[0,T]\times \Omega \rightarrow \R$ which is given by 
\begin{equation}
    \begin{cases}
        \pt m_g -\Delta m_g -\dive(m_g \mathbf{b}_{\btheta^*})= 0 \text{ in }(0,T]\times \Omega,\\
        m_g(0) = \unid \text{ in }\Omega.
    \end{cases}
\end{equation}
We have the following 
\begin{equation}\label{eq: InequalityFromMollification}
    \bd_1(\pi, m_g(T)) \leq \boxed{\bd_1(\pi,\pi^{\epsilon})}_1+\boxed{\bd_1(\pi^{\epsilon},m_g(T))}_2,
\end{equation}
We will bound each of the terms separately.
    \begin{enumerate}
        \item We recall that $\pi^{\epsilon} = \Gamma(\epsilon)\star \pi$ where $\Gamma$ is the heat kernel. Therefore, by the dual formulation of $\bd_1$ we have that 
        \[\bd_1(\pi,\pie) = \sup\limits_{\|\nabla g\|_{\infty}\leq 1} \int g(x)d(\pie -\pi) = \sup\limits_{\|\nabla g \|_{\infty}\leq 1}\int (\Gamma(\epsilon)\star g(x) - g(x))d\pi(x).\]
        Although estimates on $(\Gamma(\epsilon)\star g(x) - g(x))$ for a function $\|\nabla g\|_{\infty}\leq 1$ are classical for the readers convenience we provide a quick proof. Let $v(t) = \Gamma(t)\star g$, which solves
        \begin{equation}
            \begin{cases}
                \pt v -\Delta v = 0,\\
                v(0) = g.
            \end{cases}
        \end{equation}
        Then for all $x\in \Omega,$
        \[|v(\epsilon,x)-v(0,x)| \leq  \int_0^{\epsilon} |\pt v(s,x)|ds = \int_0^{\epsilon}|\Delta v(s,x)|ds \]
        \[=\int_0^{\epsilon}|(\nabla \Gamma(s) \otimes \star \nabla g)(x)|ds\leq C\|\nabla g\|_{\infty} \int_0^{\epsilon} \frac{1}{\sqrt{s}}ds \leq C\sqrt{\epsilon}.\]
        Where in the above we used estimate \eqref{eq: HeatKernelGradientEstimate}. Thus, we have 
        \[\bd_1(\pi,\pie)\leq C\sqrt{\epsilon},\]
        where $C=C(d)>0$.
        \item First we note that $\pie = m^{\epsilon}(T)$. Thus applying Theorem \ref{thm: MainFPestimateUndreL2bounds} for 
        \[m^1 = m_g, b^1 = \mathbf{b}_{\btheta^*}\]
        \[m^2 = m^{\epsilon}, b^2 = \mathbf{b}^{\pie}\]
        yields
        \begin{align*}\bd_1(m_g(T),\pie )& = \bd_1(m_g(T),m^{\epsilon}(T)) \\
        &\lesssim R^{\frac{3}{2}}(1+\sqrt{\|\nabla b^1\|_{\infty}})(\bd_1(m^{\epsilon}(0),\unid) + \sqrt{T}\|\mathbf{b}^{\pie}-\mathbf{b}_{\btheta^*}\|_{L^2(m^{\epsilon})}).\end{align*}
        By Proposition \ref{prop: LongTime}, we have 
        \begin{align*}\bd_1(m^{\epsilon}(0),\unid) = \bd_1(\eta^{\pi,\epsilon}(T),\unid) \leq &CRe^{-\frac{\omega t}{R^2}}\bd_1(\pie,\unid) \\ \leq &CR^2e^{-\frac{\omega t}{R^2}}.\end{align*}
        Finally, from Theorem \ref{thm: DSMtoESM} 
        \[\|\mathbf{b}^{\pie}-\mathbf{b}_{\btheta^*}\|_{L^2(m^{\epsilon})}^2 = \cJ(\eta^{\pie},\btheta^*)\leq e_{nn}'\]
        \[= e_{nn}+C\Big(1+\frac{|\log(\delta)|}{\sqrt{\epsilon}} +\frac{1}{\sqrt{T}}+T\|\bs_{\btheta}\|_{C^2([0,T]\times \Omega)}^2\Big)\bd_1(\pi^N,\pi)\]
    \end{enumerate}
    Therefore, if $T\geq 1$, putting everything together, we have shown that up to a dimensional constant
    \begin{align*}\bd_1(\pi,m_g(T)) &\leq C\Bigg(\sqrt{\epsilon}+R^{\frac{3}{2}}\left(1+\sqrt{\|\nabla b^1\|_{\infty}}\right)\times \\ &\left[R^2e^{-\frac{\omega T}{R^2}} + \sqrt{T\left(e_{nn}+C\left(1+\frac{|\log(\delta)|}{\sqrt{\epsilon}}+T\|\bs_{\btheta}\|_{C^2([0,T]\times \Omega)}^2\right)\bd_1(\pi^N,\pi) \right)}\right] \Bigg)\end{align*}
 
\end{proof}
\section{Analysis of Score Matching functionals}\label{sec:ScoreMatchingFunctionalsAnalysis}
In this section, we gather all the technical results regarding the connections between ISM, DSM, and ESM score matching functionals. For the readers convenience we recall some of our notations. Let $\Omega\subset \R^d$ and $m_0\in \mathcal{P}(\Omega)$. We will denote by $\rho^{m_0}:[0,T]\times \Omega\rightarrow [0,\infty)$ the solution of 
\begin{equation}\label{eq: HeatEquationWithInitial}
    \begin{cases}
        \pt \rho^{m_0}-\Delta \rho^{m_0} = 0 \text{ in }(0,T]\times \Omega,\\
        \rho^{m_0}(0) = m_0 \text{ in }\Omega,
    \end{cases}
\end{equation}
and we may drop the superscipt $m_0$ when it is clear from context.\par
First we state a simple result that justifies the formula
\begin{equation}\label{eq: RelationOfESMandISM}
    \cJ(\rho^{m_0},\btheta) =  \cJ_L(\rho^{m_0},\btheta)+2\|\nabla\sqrt{\rho^{m_0}}\|_2^2,
\end{equation}
which formally follows by
\[\cJ(\rho,\btheta) = \int_0^T \int \Big( |\bs_{\btheta}|^2 d\rho(s) - 2 \bs_{\btheta}\cdot \nabla \log(\rho) +|\nabla \log(\rho)|^2\Big) d\rho(s) \de s\]
\[= \cJ_L(\rho,\btheta)+\int_0^T \int \frac{|\nabla\rho|^2}{\rho}dxds\]
\[=\cJ_L(\rho)+2\|\nabla\sqrt{\rho}\|_2^2.\]
However $\|\nabla \sqrt{\rho^{m_0}}\|_2$ may not be finite, thus we record the exact statement along with some technical facts for later use in the following Proposition.
\begin{prop}\label{prop: ConnectionOfESMandISMProp}
    Let $m_0$ be a probability density in $\Omega$ such that $m_0(x)\log(m_0(x))\in L^1(\Omega)$ and $\rho:[0,T]\times \Omega\rightarrow \R$ be given by \eqref{eq: HeatEquationWithInitial}. Then, the following hold:
    \begin{enumerate}
        \item There exists a universal constant $C>0$, such that \[\sup\limits_{t\in [0,T]}\|\rho(t)\log(\rho(t))\|_{L^1(\Omega)}+\|\nabla \sqrt{\rho}\|_2^2 \leq CT\|m_0\log(m_0)\|_{L^1(\Omega)},\]
        and thus \eqref{eq: RelationOfESMandISM} holds.
        \item $2\|\nabla \sqrt{\rho}\|_2^2 = \int_{\Omega} m_0\log(m_0) - \rho(T)\log(\rho(T))dx$.
    \end{enumerate}
\end{prop}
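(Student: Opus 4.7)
The plan is to derive both assertions from the classical entropy dissipation identity for the heat equation: for $\rho$ smooth and strictly positive,
\begin{equation*}
\frac{d}{dt}\int_{\Omega} \rho \log \rho\, dx \;=\; -\int_{\Omega} \frac{|\nabla \rho|^2}{\rho}\, dx \;=\; -4\int_{\Omega} |\nabla \sqrt{\rho}|^2\, dx,
\end{equation*}
obtained by testing $\pt \rho = \Delta \rho$ against $\log \rho + 1$ and integrating by parts (using periodicity on $R\T^d$ or decay on $\R^d$). Integrating on $[0,T]$ yields exactly the identity of part $2$, once we identify $2\|\nabla \sqrt{\rho}\|_2^2$ with $\int_0^T\int \frac{|\nabla \rho|^2}{\rho}\, dx\, ds$ in keeping with the paper's convention established in the display immediately preceding the Proposition.

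My first task is therefore to justify the formal identity rigorously under the sole hypothesis $m_0 \log m_0 \in L^1(\Omega)$. I would regularize with the strictly convex smooth function $\Phi_\delta(r) := (r+\delta)\log(r+\delta)$, $\delta > 0$. On any slab $[\tau,T]\times \Omega$ with $\tau > 0$, parabolic smoothing and the strong maximum principle ensure $\rho \in C^\infty$ and $\rho > 0$, so differentiating $t\mapsto \int \Phi_\delta(\rho(t))\, dx$ and integrating by parts is legitimate and yields $\frac{d}{dt}\int \Phi_\delta(\rho)\, dx = -\int \Phi_\delta''(\rho)|\nabla \rho|^2\, dx$. Passing $\delta \to 0^+$ via monotone convergence on the right and dominated convergence on the left, then $\tau \to 0^+$ using weak lower semicontinuity of entropy plus the integrability hypothesis on $m_0$, recovers the identity on the full interval $[0,T]$.

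Once part $2$ is in hand, part $1$ follows by standard bookkeeping. Monotonicity of entropy along the flow immediately gives $\int \rho(t)\log \rho(t)\, dx \le \int m_0 \log m_0\, dx \le \|m_0 \log m_0\|_{L^1}$. To upgrade this signed quantity to $\|\rho(t) \log \rho(t)\|_{L^1}$ I would split the integral at $\{\rho \ge 1\}$ vs.\ $\{\rho < 1\}$ and use the elementary inequality $r|\log r| \le 1/e$ on $(0,1)$ to absorb the negative contribution into a harmless additive constant of order $|\Omega|/e$ (on $\R^d$, one replaces $|\Omega|$ by a Gaussian comparison of matching mass via Jensen's inequality). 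For the Fisher-information term, part $2$ combined with the lower entropy bound $\int \rho(T)\log \rho(T)\, dx \ge -\log|\Omega|$ (Jensen against the uniform measure on $R\T^d$, or against a Gaussian on $\R^d$) controls $2\|\nabla \sqrt{\rho}\|_2^2$ by $\|m_0 \log m_0\|_{L^1}$ plus a domain-dependent constant, and the factor $CT$ on the right-hand side is generous enough to absorb the additive constants.

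The main obstacle is the rigorous regularization step: one must pass $\delta \to 0^+$ with enough control that $\Phi_\delta''(\rho)|\nabla \rho|^2 \to |\nabla \rho|^2/\rho$ in $L^1_{t,x}$, and one must establish continuity of the entropy functional at $t = 0$ using only the integrability of $m_0 \log m_0$, with no smoothness or positive lower bound on $m_0$. Everything downstream — monotonicity of entropy, the splitting into positive and negative parts, and the Jensen lower bound — is routine once the dissipation identity has been made rigorous.
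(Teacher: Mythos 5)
Your proposal follows essentially the same route as the paper: part 2 is obtained from the entropy dissipation identity for the heat equation (the paper tests against $f'(\rho)$ with $f(r)=r\log r$ and integrates by parts, exactly your computation), and part 1 follows from entropy monotonicity together with the elementary bound $r\log r\ge -1/e$ to control the negative part of $\rho\log\rho$. The additional care you take with the $\delta$-regularization of $r\log r$ and the Jensen lower bound on the terminal entropy merely makes rigorous steps the paper carries out formally, so the two arguments coincide in substance.
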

\begin{proof}
    Testing \eqref{eq: HeatEquationWithInitial} against $f'(\rho)$ for a smooth function $f$, we obtain 
    \[\int f(\rho(T,x))dx+\int_0^T \int f''(\rho)|\nabla \rho|^2 dxdt = \int f(m_0(x))dx\]
    and the choice of $f(x) = x\log(x)$ yields 
    \[\int \rho(T,x)\log(\rho(T,x))dx + \int_0^T \int \frac{|\nabla \rho|^2}{\rho}dxdt = \int m_0(x)\log(m_0(x))dx ,\]
    which shows the second claim. Finally, if $m_0\log(m_0)\in L^1$ this yields estimates on $\sup\limits_{t\in [0,T]}\|\rho(t)\log(\rho(t))\|_1, \|\nabla  \sqrt{\rho}\|_2^2$ since 
    \[\int |\rho(T,x)\log(\rho(T,x))|dx = \int \rho(T,x)\log(\rho(T,x)) dx +2\int (\rho(T,x)\log(\rho(T,x)))^{-}dx\]
    \[\leq  \int \rho(T,x)\log(\rho(T,x)) dx + 2e^{-1}|\Omega|,\]
    where we used the fact that $x\log(x)\geq -\frac{1}{e}$.
\end{proof}
The rest of the subsection justifies the steps outlined in Subsection \ref{subsec:PointwiseDSM}. Namely starting from a estimate of the form 
\[\cJ(\etaNe,\btheta) \leq e_{nn}\]
our goal is to obtain an estimate of the form 
\[\cJ(\etae,\btheta) \leq e_{nn}'.\]
For this we use formula \eqref{eq: RelationOfESMandISM}. In particular we first use the linear dependence of $\cJ_L$ with respect to the underlying measure to bound 
\[|\cJ_L (\etaNe,\btheta) - \cJ_L(\etae)|.\]
Then, instead of comparing the gradients on the difference 
\[\|\nabla \log(\etaNe)\|_2^2 -\|\nabla \log(\etae)\|_2^2\]
we use Proposition \ref{prop: ConnectionOfESMandISMProp} from which it is enough to bound the entropy terms. The next two Propositions provide these results along with some other technical facts.
\begin{prop}\label{prop: C_{ISM}}
    Let $\pi^i$ for $i=1,2$ denote two probability measures in $\Omega$ such that $\|\pi^i\log(\pi^i)\|_{L^1}<\infty$ and $\rho^i$ the corresponding solutions to \eqref{eq: HeatEquationWithInitial}. Then, there exists a dimensional constant $C>0$ such that 
    \begin{align*}|\cJ_L(\rho^2,\btheta)-\cJ_L(\rho^1,\btheta)| &\leq CT\sup\limits_{t\in [0,T]}\bd_1(\rho^{2}(t),\rho^1(t))\|\bs_{\btheta}\|_{C^2([0,T]\times \Omega)}^2 \\ &\leq CT\bd(\pi^1,\pi^2)\|\bs_{\btheta}\|_{C^2([0,T]\times \Omega)}^2.\end{align*}
    In the above $\bs_{\btheta}$ indicates the score function approximation.
\end{prop}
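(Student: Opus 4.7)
My plan is to exploit the linearity of $\cJ_L$ in the underlying measure and then dualize via the Kantorovich-Rubinstein representation of $\bd_1$.

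\textbf{Step 1: Rewrite the difference as integration of a fixed test function against $\rho^2 - \rho^1$.} Define $f(s,x) := |\mathbf{s}_\btheta(s,x)|^2 + 2\,\dive \mathbf{s}_\btheta(s,x)$. Because $\cJ_L$ depends linearly on $\rho$, we have
\begin{equation*}
\cJ_L(\rho^2,\btheta) - \cJ_L(\rho^1,\btheta) = \int_0^T \int_\Omega f(s,x)\, d(\rho^2(s) - \rho^1(s))(x)\, ds.
\end{equation*}

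\textbf{Step 2: Bound the spatial Lipschitz constant of $f$ in terms of $\|\mathbf{s}_\btheta\|_{C^2}$.} Differentiating gives $\nabla_x f = 2(\nabla \mathbf{s}_\btheta)^T \mathbf{s}_\btheta + 2 \nabla(\dive \mathbf{s}_\btheta)$, so
\begin{equation*}
\|\nabla_x f(s,\cdot)\|_\infty \le 2\|\nabla \mathbf{s}_\btheta\|_\infty \|\mathbf{s}_\btheta\|_\infty + 2\|\nabla^2 \mathbf{s}_\btheta\|_\infty \le C\|\mathbf{s}_\btheta\|_{C^2([0,T]\times \Omega)}^2,
\end{equation*}
using that each factor is dominated by the $C^2$ norm (and bounding a product by the square of the larger).

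\textbf{Step 3: Apply the Kantorovich-Rubinstein duality.} For each fixed $s$, the function $f(s,\cdot)/\|\nabla_x f(s,\cdot)\|_\infty$ is a 1-Lipschitz test function for $\bd_1$, so
\begin{equation*}
\left| \int_\Omega f(s,x)\, d(\rho^2(s) - \rho^1(s)) \right| \le \|\nabla_x f(s,\cdot)\|_\infty \, \bd_1(\rho^1(s),\rho^2(s)) \le C\|\mathbf{s}_\btheta\|_{C^2}^2\, \bd_1(\rho^1(s),\rho^2(s)).
\end{equation*}
Integrating in $s\in[0,T]$ and taking the supremum over $s$ gives the first inequality.

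\textbf{Step 4: $\bd_1$-contractivity of the heat flow.} To upgrade to the bound in terms of $\bd_1(\pi^1,\pi^2)$, I would use that $\rho^i(t) = \Gamma(t)\star \pi^i$. For any $g$ with $\|\nabla g\|_\infty \le 1$, Fubini gives $\int g\, d\rho^i(t) = \int (\Gamma(t)\star g)\, d\pi^i$, and since convolution commutes with $\nabla$, $\|\nabla(\Gamma(t)\star g)\|_\infty = \|\Gamma(t)\star \nabla g\|_\infty \le \|\nabla g\|_\infty \le 1$. Thus $\Gamma(t)\star g$ is itself $1$-Lipschitz, and taking the supremum in the dual formulation yields $\bd_1(\rho^1(t),\rho^2(t)) \le \bd_1(\pi^1,\pi^2)$ uniformly in $t\in[0,T]$, which plugged into the first inequality gives the second.

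\textbf{Expected obstacle.} The argument is essentially routine once one notices the linearity in $\rho$; the only mild subtlety is verifying that the integrand $f$ is genuinely Lipschitz in $x$ uniformly in $s$, which requires $\mathbf{s}_\btheta \in C^2([0,T]\times\Omega)$ as in the hypothesis, and checking that the heat-flow contraction holds also in the periodic setting $\Omega = R\T^d$ (which is immediate since the periodic heat kernel is a probability density and the convolution argument carries over verbatim).
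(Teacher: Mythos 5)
Your proof is correct and follows essentially the same route as the paper's: exploit the linearity of $\cJ_L$ in the measure, bound the Lipschitz constant of the integrand by $C\|\bs_{\btheta}\|_{C^2}^2$, and dualize via Kantorovich--Rubinstein. The only cosmetic difference is the contraction step, where the paper invokes Theorem~\ref{thm: MainFPestimateUndreL2bounds} with $b^1=b^2=0$ while you argue directly (and a bit more cleanly) via $\|\nabla(\Gamma(t)\star g)\|_\infty \le \|\nabla g\|_\infty$; both are equivalent.
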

\begin{proof}
    We recall that 
    \[\cJ_L(\rho^i,\btheta) = \int_0^T \mathbb{E}_{\rho^i}\Big[ \frac{1}{2}|\bs_{\btheta}|^2 +\dive(\bs_{\btheta}) \Big]ds = \int_0^T \int \frac{1}{2}|\bs_{\btheta}|^2 +\dive(\bs_{\btheta}) d\rho^i(s)ds.\]
    Therefore, if
    \[g_{\btheta} = \frac{1}{2}|\bs_{\btheta}|^2+\dive(\bs_{\btheta})\]
    \[|\cJ_L(\rho^2,\bs_{\btheta})-\cJ_L(\rho^1,\bs_{\btheta})| =\Big|\int_0^T \int \frac{1}{2}|\bs_{\btheta}|^2 +\dive(\bs_{\btheta}) d(\rho^2(s) - \rho^1(s))ds\Big|\]
    \[=\Big|\int_0^T \int g_{\btheta} d(\rho^2-\rho^1)(s)ds\Big| \leq \sup\limits_{0\leq t\leq T}\|\nabla g_{\btheta}\|_{\infty} \int_0^T \bd_1(\rho^2(s),\rho^1(s))ds \]
    \[\leq T\sup\limits_{0\leq t\leq T}\|\nabla g_{\btheta}\|_{\infty}\sup\limits_{0\leq t\leq T}\bd_1(\rho^2(t),\rho^1(t)).\]
    Since 
    \[\|\nabla g_{\btheta}\|_{\infty} \leq \|\bs_{\btheta}\|_{\infty}^2\]
    to conclude we need to show that 
    \[\sup\limits_{0\leq t\leq T}\bd_1(\rho^2(t),\rho^1(t)) \leq \bd_1(\pi^2,\pi^1).\]
    However this follows from our Lemma \ref{thm: MainFPestimateUndreL2bounds} applied for  $b^1=b^2=\Vec{0}$ and $m^i = \pi^i$ for $i=1,2$.
\end{proof}
\begin{lem}\label{lem: TechnicalBoundPihedPi}
    Let $\pie,\pihed, \epsilon>0,\delta>0$ be as in Theorem \ref{thm: MainTheorem} with $\epsilon < 1$. There exists a dimensional constant $C=C(d)>0$ such that  
    \begin{equation}\label{eq: W1DistancePihedPi}
        \bd_1(\pihed , \pie) \leq \bd_1(\pi^N,\pi)
    \end{equation}
    \begin{equation}\label{eq: L1DistancePihedPi}
        \|\pie - \pihed\|_{L^1(\Omega)} \leq  C\frac{\bd_1(\pi^N,\pi)}{\sqrt{\epsilon}},
    \end{equation}
       and 
        \begin{equation}
            \label{eq: EntropyDistancePihedPi}
            \|\log(\pie)\pie - \log(\pihed)\pihed\|_{L^1(\Omega)}\leq  C(1+|\log(\delta)|)\frac{\bd_1(\pi^N,\pi)}{\sqrt{\epsilon}}.
        \end{equation}
          Moreover, let $C>0$ denote the constant from Proposition \ref{prop: LongTime} and $T\geq R^2$ large enough such that
       \[2CR^{-d}e^{-\frac{\omega T}{R^2}} \leq \frac{1}{2}\unid , \]
       then up to a dimensional constant $C>0$ we have
        \begin{equation}
            \label{eq: EntropyDistanceetahedeta}
            \int \log(\etaNe(T))\etaNe(T) - \log\eta(T))\eta(T) dx\leq  \frac{C}{\sqrt{T}}\Big(1 + d\log(R)\Big)\bd_1(\pi,\hpiN).
        \end{equation}
\end{lem}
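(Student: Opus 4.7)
The plan is to prove all four estimates by the same two-step recipe: first, transfer the difference of the mollified objects back to a difference between $\pi$ and $\pi^N$ by moving a heat semigroup onto the test function via Fubini; then, control the resulting test function either by its Lipschitz constant (so as to invoke $\bd_1(\pi,\pi^N)$) or by the sup-norm of its gradient after heat smoothing, for which the standard estimate \eqref{eq: HeatKernelGradientEstimate} yields the time-decay factors $1/\sqrt{\epsilon}$ and $1/\sqrt{T}$. The key structural fact exploited throughout is that $\pie=\Gamma(\epsilon)\star\pi$ and $\pihed=\Gamma(\epsilon)\star\pi^N$, while $\etae(T)=\Gamma(T+\epsilon)\star\pi$ and $\etaNe(T)=\Gamma(T+\epsilon)\star\pi^N$.

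For \eqref{eq: W1DistancePihedPi}, Kantorovich--Rubinstein duality gives, for any $g$ with $\|\nabla g\|_\infty\le 1$, the identity $\int g\,d(\pie-\pihed)=\int(\Gamma(\epsilon)\star g)\,d(\pi-\pi^N)$ by Fubini; since $\Gamma(\epsilon)$ is a probability kernel, $\Gamma(\epsilon)\star g$ is still $1$-Lipschitz, so the supremum is bounded by $\bd_1(\pi,\pi^N)$. For \eqref{eq: L1DistancePihedPi}, I would instead dualize in $L^1$--$L^\infty$: for $\|h\|_\infty\le 1$, the same Fubini step gives $\int h\,d(\pie-\pihed)=\int(\Gamma(\epsilon)\star h)\,d(\pi-\pi^N)\le\|\nabla(\Gamma(\epsilon)\star h)\|_\infty\,\bd_1(\pi,\pi^N)$, and \eqref{eq: HeatKernelGradientEstimate} provides $\|\nabla(\Gamma(\epsilon)\star h)\|_\infty\le C/\sqrt{\epsilon}$.

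For \eqref{eq: EntropyDistancePihedPi}, I would apply the mean value theorem to $f(x)=x\log x$, whose derivative $f'(t)=1+\log t$ satisfies $|f'(t)|\le C(1+|\log\delta|)$ on $\{\pie,\pihed\ge\delta\}$ once the $L^\infty$ contribution of the mollified densities is absorbed into the constant; this yields $\|f(\pie)-f(\pihed)\|_{L^1}\le C(1+|\log\delta|)\|\pie-\pihed\|_{L^1}$, and \eqref{eq: L1DistancePihedPi} closes the bound. For \eqref{eq: EntropyDistanceetahedeta}, the novelty is that Proposition \ref{prop: LongTime} together with the hypothesis $2CR^{-d}e^{-\omega T/R^2}\le\tfrac12\unid$ sandwiches both $\etaNe(T)$ and $\etae(T)$ inside $[\tfrac12\unid,\tfrac32\unid]$, so $|\log\cdot|$ is bounded by $C(1+d\log R)$ and the MVT prefactor for $x\log x$ is controlled by the same quantity; writing $\etaNe(T)-\etae(T)=\Gamma(T+\epsilon)\star(\pi^N-\pi)$ and repeating the $L^1$--$L^\infty$ duality of \eqref{eq: L1DistancePihedPi} now at time $T+\epsilon\ge T$ gives $\|\etaNe(T)-\etae(T)\|_{L^1}\le (C/\sqrt{T})\bd_1(\pi,\pi^N)$, from which the claim follows.

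The main technical subtlety sits in \eqref{eq: EntropyDistancePihedPi}: the MVT argument produces a prefactor depending on both the lower bound $\delta$ and the a priori $L^\infty$ bound of the mollified densities, which is only of order $\epsilon^{-d/2}$. One must verify that this $\log(1/\epsilon)$ contribution is dominated by $|\log\delta|$ in the regime of interest (typically $\delta\lesssim\epsilon^{d/2}$), or else absorb it into a constant that implicitly depends on $\epsilon$ through $\delta$; this issue is absent in \eqref{eq: EntropyDistanceetahedeta} precisely because long-time regularity pins both $\etaNe(T)$ and $\etae(T)$ tightly around $\unid$, turning the upper bound into something depending only on $R$ and $d$.
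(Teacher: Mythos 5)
Your proof is correct and follows essentially the same route as the paper's: heat-semigroup duality (Fubini, equivalently testing against the backward heat equation) combined with the gradient estimate \eqref{eq: HeatKernelGradientEstimate} for \eqref{eq: W1DistancePihedPi}--\eqref{eq: L1DistancePihedPi}, and a mean-value/convexity argument for the two entropy bounds, with Proposition \ref{prop: LongTime} pinning $\etaNe(T)$ near $\unid$ for \eqref{eq: EntropyDistanceetahedeta}. The subtlety you flag in \eqref{eq: EntropyDistancePihedPi} --- that the mean-value prefactor also involves the logarithm of the (order $\epsilon^{-d/2}$) upper bound of the mollified densities, not only $|\log\delta|$ --- is genuine and is in fact glossed over in the paper's own one-line estimate $|1+\log(s\pihed+(1-s)\pie)|\le 1+|\log(\delta)|$.
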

\begin{proof}
Let $\rho^{N,\epsilon},\rho^{\epsilon}:[0,T]\times \Omega \rightarrow \R$, be solutions to 
\begin{equation}
\begin{cases}
        \pt \rho - \Delta \rho = 0 \text{ in }\Omega \times (0,\epsilon),\\
        \rho(0) = \rho_0,
\end{cases}
\end{equation}
for $\rho_0 = \pi, \rho_0 = \pi^N$ respectively. Since $\pie = \Gamma(\epsilon) \star \pi, \pihed = \Gamma(\epsilon)\star \pi^N$ we note that $\pie = \rho^{\epsilon}(\epsilon,x)$ and $\pihed = \rho^{\epsilon,N}(\epsilon,x)$. Using the same adjoint method as in the proof of Theorem \ref{thm: MainFPestimateUndreL2bounds} for a function $\psi : \Omega \rightarrow \R$ we consider the function $\phi:[0,\epsilon]\times \Omega\rightarrow \R$ given by 
\begin{equation}
    \begin{cases}
        -\pt\phi -\Delta \phi = 0 \\
        \phi(\epsilon) = \psi.
    \end{cases}
\end{equation}
Testing against $\phi$ in the equation for $\lambda = \rho^{\epsilon}-\rho^{\epsilon,N}$ yields
\[\int (\pihed -\pie)\psi(x)dx = \int \phi(0,x)d(\pi -\pi^N)(x).\]
First we consider $\|\psi\|_{\infty}\leq 1$ which from \eqref{eq: HeatKernelGradientEstimate}, yields 
\[\|\nabla \phi(0)\|_{\infty} \leq \frac{\|\psi\|_{\infty}}{\sqrt{\epsilon}}\leq \frac{C}{\sqrt{\epsilon}}.\]
Thus, for any $\|\psi\|_{\infty}\leq 1$ we obtain 
\[\int (\pihed -\pie) \psi(x)dx \leq \frac{C\bd_1(\pi,\pi^N)}{\sqrt{\epsilon}},\]
which implies 
\[\|\pihed -\pie\|_{L^1(\omega)}\leq \frac{C\bd_1(\pi,\pi^N)}{\sqrt{\epsilon}}.\]
Next, by considering $\text{Lip}(\psi)\leq 1$ which implies 
\[\|\nabla \phi(0)\|_{\infty}\leq 1\]
we obtain 
\[\bd_1(\pihed,\pie)\leq \bd_1(\pi,\pi^N).\]
For estimate \eqref{eq: EntropyDistancePihedPi} we note that 
\begin{align*}\left|\pihed\log(\pihed)-\pie \log(\pie) \right| &= \left|\int_0^1 (1+\log(s\pihed+(1-s)\pie))ds (\pihed -\pi) \right| \\ &\leq (1+|\log(\delta)|)|\pihed -\pie|.\end{align*}
Therefore,
\[\|\pihed\log(\pihed)-\pie \log(\pie)\|_{L^1 (\Omega)}\leq (1+|\log(\delta)|)\|\pihed -\pie\|_{L^1}\leq C(1+|\log(\delta)|)\frac{\bd_1(\pi^N,\pi)}{\sqrt{\epsilon}}.\]
For estimate \eqref{eq: EntropyDistanceetahedeta} we note that for any convex function $h$
\[h(x)-h(y) \geq \nabla h(y)\cdot (x-y) \implies h(y)-h(x) \leq \nabla h(y)\cdot (y-x).\]
Hence by applying the above to $h(x) = x\log(x)$ we obtain 
\[ \int \log(\etaNe(T))\etaNe(T) - \log\eta(T))\eta(T) dx \leq \int (1+\log(\etaNe(T))) d(\etaNe(T)-\etae(T)) \]
\[\leq \|1+\log(\etaNe(T))\|_{\infty}\|\etaNe(T) -\etae(T)\|_{L^1}  .\]
We now note that for any $g:R\T^d\rightarrow \R, \|g\|_{\infty} \leq 1$ 
\[\int g(x)d(\etaNe -\etae)(T) = \int \Gamma(T)\star g d(\pihed -\pie) \leq \|\nabla \Gamma(T)\star g\|_{\infty}\bd_1(\pihed,\pie)\]
\[\leq \frac{C}{\sqrt{T}}\bd_1(\pihed,\pie).\]
Taking the supremum yields 
\[\|\etaNe(T) -\etae(T)\|_{L^1}\leq \frac{C}{\sqrt{T}}\bd_1(\pihed,\pie)\leq \bd_1(\pihed,\pie ).\]
Finally, from Proposition \ref{prop: LongTime} above applied to $\rho = \etaNe-\unid$ we have 
\[\etaNe(T,x) \geq -\|\rho(T)\|_{\infty} +\unid \geq \frac{1}{2}\unid\]
as well as 
\[\etaNe(T,x) \leq \unid + \|\rho(T)\|_{\infty} \leq 2 \unid\]
by our assumptions on $T>0$. Therefore,
\[\|1+\log(\etaNe(T))\|_{\infty}\leq C(1+d\log(R))\]
and the result follows.

\end{proof}
\begin{rmk}
    It is worth noting that the empirical sample $\pi^N$ could have been regularized by any smooth kernel $\rho$. However the choice of the heat kernel which provides the same result as early stopping behaves quite nicely with respect to the metric $\bd_1$ since it is a contraction. This is evident in Lemma \ref{lem: TechnicalBoundPihedPi} above where we see that we do not pay any additional cost due to the mollification in \eqref{eq: W1DistancePihedPi}. 
\end{rmk}
Combining the above we can state the following general result, about ESM to DSM bounds.
\begin{thm}\label{thm: DSMtoESM}
    Let $\pi^1,\pi^2 \in \mathcal{P}(\Omega)$ denote two probability densities, such that 
    \begin{equation}
        \|\pi^i\log(\pi^i)\|_{L^1(\Omega)} < \infty \text{ for }i=1,2
    \end{equation}
    and $\delta>0$ such that 
    \[\pi^i(x) \geq \delta.\]
    Define $\rho^i:[0,T]\times \Omega \rightarrow \R$ as the solutions to 
    \begin{equation}
        \begin{cases}
            \pt \rho^i -\Delta \rho^i = 0 \text{ in }(0,T]\times \Omega,\\
            \rho^i(0) = \pi^i \text{ in }\Omega.
        \end{cases}
    \end{equation}
    Then, up to a dimensional constant $C=C(d)>0$
    \begin{equation}
          \Big|\cJ(\rho^2,\btheta) - \cJ(\rho^1,\btheta)\Big| \leq C\Big(T\|\bs_{\btheta}\|_{C^2}^2\bd_1(\pi^1,\pi^2)+(1+|\log(\delta)|)\|\pi^2-\pi^1\|_{L^1}\Big).
    \end{equation}
\end{thm}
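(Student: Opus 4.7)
The plan is to split each functional $\cJ(\rho^i,\btheta)$ into its ISM part and its Fisher-information-of-$\sqrt{\rho^i}$ part by invoking identity \eqref{eq: RelationOfESMandISM} from Proposition \ref{prop: ConnectionOfESMandISMProp}. The first-order piece $\cJ_L(\rho^2,\btheta) - \cJ_L(\rho^1,\btheta)$ depends linearly on the underlying measure, so Proposition \ref{prop: C_{ISM}} applies directly and produces exactly the $CT\|\bs_\btheta\|_{C^2}^2 \bd_1(\pi^1,\pi^2)$ contribution on the right-hand side. No extra work is required for this term beyond citing that result.

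The remaining contribution $2\bigl(\|\nabla\sqrt{\rho^2}\|_2^2 - \|\nabla\sqrt{\rho^1}\|_2^2\bigr)$ is handled by appealing to Part 2 of Proposition \ref{prop: ConnectionOfESMandISMProp}, which rewrites each Fisher-information term as an entropy difference between $\pi^i$ and $\rho^i(T)$. After subtracting, the quantity to bound splits into an initial-time piece $\int_\Omega (\pi^2\log\pi^2 - \pi^1\log\pi^1)\, dx$ and a terminal-time piece $\int_\Omega (\rho^2(T)\log\rho^2(T) - \rho^1(T)\log\rho^1(T))\, dx$. I would then apply the same mean-value inequality used in the proof of \eqref{eq: EntropyDistancePihedPi}: for $a,b \ge \delta$,
\[
|a\log a - b\log b| \le (1+|\log \delta|)\,|a-b|.
\]

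For the initial-time piece this immediately yields $(1+|\log\delta|)\|\pi^2-\pi^1\|_{L^1}$ using the hypothesis $\pi^i \ge \delta$. For the terminal-time piece, I need two standard facts about the heat flow on $\Omega$: (i) since $\rho^i(T) = \Gamma(T)\star \pi^i$ and $\Gamma(T)$ is a nonnegative probability kernel, the pointwise lower bound persists, $\rho^i(T) \ge \delta$; and (ii) the heat semigroup is an $L^1$ contraction, $\|\rho^2(T) - \rho^1(T)\|_{L^1} \le \|\pi^2-\pi^1\|_{L^1}$. Combining these with the mean-value inequality yields the same $(1+|\log\delta|)\|\pi^2-\pi^1\|_{L^1}$ bound for the terminal term.

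Adding the ISM bound and the two entropy bounds gives the claimed inequality, with the universal constant absorbing the factor $2$ from the two entropy contributions. The main subtle point will simply be verifying that both the lower bound and the $L^1$ contraction are preserved under the heat flow on $\Omega$; since $\Omega$ is either $R\T^d$ or $\R^d$ and $\Gamma(t)$ is a probability kernel, both follow from convolution/maximum-principle arguments, so this step is routine rather than delicate. The only genuine modeling input beyond earlier propositions is thus the observation that $\delta$ propagates through the heat semigroup.
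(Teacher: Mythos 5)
Your proof is correct and follows the same skeleton as the paper's: split $\cJ$ into $\cJ_L$ plus the Fisher-information term via Proposition~\ref{prop: ConnectionOfESMandISMProp}, control the $\cJ_L$ difference with Proposition~\ref{prop: C_{ISM}}, convert the Fisher-information difference into entropy differences at $t=0$ and $t=T$, and bound both with the mean-value inequality $|a\log a-b\log b|\le(1+|\log\delta|)|a-b|$. The one place you genuinely diverge is the terminal-time entropy term: you use that the heat semigroup preserves the lower bound $\delta$ and contracts $L^1$, giving $(1+|\log\delta|)\,\|\pi^2-\pi^1\|_{L^1}$, whereas the paper instead exploits the regularizing estimate $\|\nabla\Gamma(T)\star g\|_\infty\lesssim\|g\|_\infty/\sqrt{T}$ to bound $\|\rho^2(T)-\rho^1(T)\|_{L^1}\lesssim \bd_1(\pi^2,\pi^1)/\sqrt{T}$. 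Your route is simpler and actually matches the inequality as literally stated in the theorem; the paper's route trades the $L^1$ distance of the initial data for the weaker $\bd_1$ distance (at the cost of a $T^{-1/2}$ factor), which is what it needs downstream in the corollary where $\pi^1,\pi^2$ are the mollified empirical and true distributions and only $\bd_1(\pi^N,\pi)$ is controlled. Both arguments are valid; just be aware that the paper's written proof is really proving the corollary's form of the bound, and that the mean-value inequality as used (both by you and by the paper's Lemma~\ref{lem: TechnicalBoundPihedPi}) implicitly also needs the densities not to be too large, which on the torus is harmless but is worth stating.
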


\begin{proof}
    From Proposition \ref{prop: ConnectionOfESMandISMProp} we obtain 
    \[\cJ(\eta,\bs_{\btheta^*}) =\cJ_L(\eta,\bs_{\btheta^*})+2\|\nabla \sqrt{\eta}\|_2^2\]
    \[=\cJ(\eta^{N,\epsilon},\bs_{\btheta^*}) +2\Big(\|\nabla \sqrt{\eta}\|_2^2-\|\nabla \sqrt{\eta^{N,\epsilon}}\|_2^2\|\Big)+\Big(\cJ_L(\eta,\bs_{\btheta^*})-\cJ_L(\eta^{N,\epsilon},\bs_{\btheta^*}) \Big).\]
    By assumption we have the bound 
    \[\cJ(\eta^{N,\epsilon},\bs_{\btheta^*})\leq e_{nn}.\]
    By Proposition \ref{prop: ConnectionOfESMandISMProp} 
    \[2\Big(\|\nabla \sqrt{\eta}\|_2^2-\|\nabla \sqrt{\eta^{N,\epsilon}}\|_2^2\Big)\]
    \[= \int_{\Omega}\pie(x)\log(\pie(x))-\pihed(x)\log(\pihed(x))dx+\int_{\Omega}\etae(T)\log(\etae(T))-\eta^{N,\epsilon}\log(\eta^{N,\epsilon}(T))dx. \]
    From Lemma \ref{lem: TechnicalBoundPihedPi} we have that 
    \[\|\pie\log(\pie) -\pihed\log(\pihed)\|_{L^1(\Omega)} \leq C(1+|\log(\delta)|)\frac{\bd_1(\pi^N,\pi)}{\sqrt{\epsilon}}.\]
    Using the same argument and the fact that from maximum principle $\etae(T),\etaNe \geq \delta>0$, we can show  
    \[\|\eta(T)\log(\eta(T))-\eta^{N,\epsilon}\log(\eta^{N,\epsilon}(T))\|_{L^1(\Omega)} \leq (1+|\log(\delta)|)\|\etae (T)-\etaNe(T)\|_{L^1(\Omega)}\]
    \[\leq \frac{C}{\sqrt{T}}\bd_1(\pie,\pihed) \leq \frac{C}{\sqrt{T}}\bd_1(\pi,\pi^N).\]
    Furthermore, by Proposition \ref{prop: C_{ISM}} we have
    \[\Big|\cJ_L(\etae,\bs_{\btheta^*})-\cJ_L(\etaNe,\bs_{\btheta^*}) \Big|\leq CT\bd_1(\pihed,\pie)\|\bs_{\btheta}\|_{C^2([0,T]\times \Omega)}^2 \leq CT\bd_1(\pi^N,\pi)\|\bs_{\btheta}\|_{C^2([0,T]\times \Omega)}^2.\]
    Putting everything together we have 
    \[\cJ(\eta,\bs_{\btheta^*})\leq e_{nn}+C\left(1+\frac{|\log(\delta)|}{\sqrt{\epsilon}} +\frac{1}{\sqrt{T}}+T\|\bs_{\btheta}\|_{C^2([0,T]\times \Omega)}^2\right)\bd_1(\pi^N,\pi). \]
    
\end{proof}
As an immediate Corollary we have the following.
\begin{cor}
    Using the same notation as in Section \ref{subsec: dsmtheorem}, if $e_{nn}>0$ is such that 
    \begin{equation}
        0\leq \cJ(\etaNe,{\btheta^*})<e_{nn}
    \end{equation}
    then 
    \begin{equation}
        0\leq \cJ (\eta^{\pie},{\btheta^*})<e_{nn}',
    \end{equation}
    where for a dimensional constant $C=C(d)>0$
    \begin{equation}\label{eq: e_nn'}
        e_{nn}' = e_{nn}+C\Big(1+\frac{|\log(\delta)|}{\sqrt{\epsilon}} +\frac{1}{\sqrt{T}}+T\|\bs_{\btheta}\|_{C^2([0,T]\times \Omega)}^2\Big)\bd_1(\pi^N,\pi).
    \end{equation}
\end{cor}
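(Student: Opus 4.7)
The plan is to invoke Theorem~\ref{thm: DSMtoESM} with $\pi^1 = \pihed$ and $\pi^2 = \pie$, so that the associated heat-equation evolutions are $\rho^1 = \etaNe$ and $\rho^2 = \eta^{\pie}$. Both mollified initial distributions are smooth (being heat-kernel convolutions), strictly positive, and bounded below by $\delta$ by hypothesis, so the integrability and positivity requirements of the theorem are satisfied.

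To recover the precise form of $e_{nn}'$, rather than use the stated bound of Theorem~\ref{thm: DSMtoESM} verbatim, I would unpack its proof with the sharper terminal-time estimate from Lemma~\ref{lem: TechnicalBoundPihedPi}. Using Proposition~\ref{prop: ConnectionOfESMandISMProp} I decompose
\[
    \cJ(\eta^{\pie}, \btheta^*) - \cJ(\etaNe, \btheta^*) = \bigl(\cJ_L(\eta^{\pie}, \btheta^*) - \cJ_L(\etaNe, \btheta^*)\bigr) + 2\bigl(\|\nabla\sqrt{\eta^{\pie}}\|_2^2 - \|\nabla\sqrt{\etaNe}\|_2^2\bigr).
\]
The $\cJ_L$-difference is controlled by Proposition~\ref{prop: C_{ISM}}, giving a bound of $CT\|\bs_{\btheta}\|_{C^2}^2 \bd_1(\pihed, \pie)$, which becomes $CT\|\bs_{\btheta}\|_{C^2}^2 \bd_1(\pi^N, \pi)$ via the $\bd_1$-contractivity bound~\eqref{eq: W1DistancePihedPi}.

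The $L^2$-gradient difference splits, by Proposition~\ref{prop: ConnectionOfESMandISMProp}(2), into an initial-time and a terminal-time entropy difference. The initial part $\int(\pie \log \pie - \pihed \log \pihed)\,dx$ is bounded by~\eqref{eq: EntropyDistancePihedPi}, contributing $C(1+|\log\delta|)\bd_1(\pi^N, \pi)/\sqrt{\epsilon}$ and explaining the $|\log\delta|/\sqrt{\epsilon}$ term of $e_{nn}'$. The terminal part $\int(\etaNe(T)\log\etaNe(T) - \eta^{\pie}(T)\log\eta^{\pie}(T))\,dx$ is bounded by~\eqref{eq: EntropyDistanceetahedeta}, which exploits the long-time convergence on $R\T^d$ to the uniform measure (Proposition~\ref{prop: LongTime}) so that $\|1+\log\etaNe(T)\|_\infty \le C(1+d\log R)$ rather than the cruder $(1+|\log\delta|)$ used at the initial time; this yields the $C\,\bd_1(\pi^N, \pi)/\sqrt{T}$ term of $e_{nn}'$.

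Summing these three contributions with $\cJ(\etaNe, \btheta^*) \le e_{nn}$ produces exactly the stated $e_{nn}'$. I expect no substantive obstacle, since every required estimate is already established earlier in the section; the only bookkeeping is to absorb the $(1+d\log R)$ factor from~\eqref{eq: EntropyDistanceetahedeta} into the dimensional constant $C(d)$, and to ensure $T$ is large enough that the hypothesis of Proposition~\ref{prop: LongTime} providing uniform lower bounds on $\etaNe(T), \eta^{\pie}(T)$ is met so that~\eqref{eq: EntropyDistanceetahedeta} applies.
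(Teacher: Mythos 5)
Your proposal matches the paper's argument: the corollary is read off directly from the final display in the proof of Theorem~\ref{thm: DSMtoESM}, which uses exactly your decomposition via Proposition~\ref{prop: ConnectionOfESMandISMProp}, with Proposition~\ref{prop: C_{ISM}} controlling the $\cJ_L$-difference and Lemma~\ref{lem: TechnicalBoundPihedPi} (estimate~\eqref{eq: EntropyDistancePihedPi} and~\eqref{eq: EntropyDistancePihedPi}'s companion bounds) controlling the initial entropy term. The only, immaterial, deviation is the terminal entropy term: you invoke~\eqref{eq: EntropyDistanceetahedeta}, which imports the extra largeness condition $T\geq R^2$ and a $(1+d\log R)$ factor that is not purely dimensional, whereas the paper instead uses the maximum-principle lower bound $\etae(T),\etaNe(T)\geq \delta$ together with $\|\etae(T)-\etaNe(T)\|_{L^1}\leq C\bd_1(\pi,\pi^N)/\sqrt{T}$, arriving at the same $1/\sqrt{T}$ contribution without additional hypotheses.
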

\begin{rmk}[On regularity implications of NN approximation]
    It is important to note that the norm of our NN approximation $\|\nabla \bs_{\btheta}\|_{\infty}$ is in fact dependent on $\epsilon>0$ as we can see from the following 
    \[\|\bs_{\btheta}\|_{C^1}\gtrsim   \|\bs_{\btheta}\|_{L^2(\etaNe)}\gtrsim \|\nabla \log(\etaNe)\|_{L^2(\etaNe)} -\sqrt{e_{nn}}\]
and $\|\nabla \log(\etaNe)\|_{L^2(\etaNe)}$ 
blows up as $\epsilon\rightarrow 0$.
\end{rmk}

We finish this subsection of technical results with an observation about the implications on the existence of a smooth NN-approximation.
\begin{prop}[ESM implies regularity]
    \label{prop: ESMBoundImpliesDensity} Let $\pi,\bs_{\btheta}$ and $e_{nn}>0$ be as above for $\Omega = R\T^d$. Assume moreover, that $\|\nabla \bs_{\btheta}\|_{\infty}<\infty$. Then, $\pi = \pi(x)$ admits a density and in fact $\|\pi \log(\pi)\|_{L^1(\Omega)}<\infty$.
\end{prop}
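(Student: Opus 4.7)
The plan is to derive an upper bound on the entropy of the mollified measure $\eta^\pi(s_1) = \Gamma(s_1)\star \pi$ that is uniform in $s_1 > 0$, and then pass to the limit $s_1 \to 0^+$ using lower semicontinuity of relative entropy with respect to weak convergence of probability measures. Since relative entropy with respect to Lebesgue measure on the compact domain $R\T^d$ equals $+\infty$ whenever the measure is not absolutely continuous, a finite upper bound on the limit automatically gives both the existence of a density for $\pi$ and the claim $\|\pi\log\pi\|_{L^1(\Omega)}<\infty$.

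For $s_1>0$, the measure $\eta^\pi(s_1)$ is smooth and bounded (with $\eta^\pi(s_1)\log\eta^\pi(s_1)\in L^1$), so I can apply Proposition~\ref{prop: ConnectionOfESMandISMProp}(2) to the heat flow restarted from time $s_1$ up to time $T$. This yields the entropy dissipation identity
\begin{equation*}
 \int_\Omega \eta^\pi(s_1)\log\eta^\pi(s_1)\,dx - \int_\Omega \eta^\pi(T)\log\eta^\pi(T)\,dx = 2\int_{s_1}^T \|\nabla\sqrt{\eta^\pi(s)}\|_{L^2(\Omega)}^2\,ds = \tfrac{1}{2}\int_{s_1}^T \int_\Omega |\nabla\log\eta^\pi|^2 \eta^\pi\,dx\,ds.
\end{equation*}
The terminal entropy $\int \eta^\pi(T)\log\eta^\pi(T)\,dx$ is a finite constant, since $\eta^\pi(T)$ is smooth and supported on the compact domain $R\T^d$.

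To control the Fisher information integral on the right, I use the elementary inequality $|\nabla\log\eta^\pi|^2 \leq 2|\bs_\theta - \nabla\log\eta^\pi|^2 + 2|\bs_\theta|^2$ together with $\int \eta^\pi(s)\,dx = 1$, which gives
\begin{equation*}
 \tfrac{1}{2}\int_{s_1}^T\int_\Omega |\nabla\log\eta^\pi|^2\eta^\pi\,dx\,ds \leq \cJ(\eta^\pi,\btheta) + T\|\bs_\theta\|_\infty^2 \leq e_{nn} + T\|\bs_\theta\|_\infty^2.
\end{equation*}
Here I use that $\|\nabla\bs_\theta\|_\infty<\infty$ together with compactness of $R\T^d$ to conclude $\|\bs_\theta\|_\infty<\infty$. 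Combining the two displays produces a bound on $\int\eta^\pi(s_1)\log\eta^\pi(s_1)\,dx$ that is \emph{uniform in $s_1>0$}.

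Finally, since $\eta^\pi(s_1)\rightharpoonup \pi$ weakly as $s_1\to 0^+$ (convolution with an approximation of the identity), lower semicontinuity of relative entropy with respect to Lebesgue measure on $R\T^d$ gives
\begin{equation*}
 \int_\Omega \pi\log\pi\,dx \leq \liminf_{s_1\to 0^+}\int_\Omega \eta^\pi(s_1)\log\eta^\pi(s_1)\,dx \leq \int_\Omega \eta^\pi(T)\log\eta^\pi(T)\,dx + e_{nn} + T\|\bs_\theta\|_\infty^2 < \infty,
\end{equation*}
where the left-hand side is interpreted as $+\infty$ whenever $\pi$ fails to be absolutely continuous. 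Finiteness forces $\pi$ to have a density with $\|\pi\log\pi\|_{L^1(\Omega)}<\infty$. The main subtlety, and likely the only delicate point, is invoking the lower semicontinuity statement in the correct form --- on the compact torus this is standard (entropy is the supremum of finite-partition approximations, each of which is weakly continuous), but a brief justification or citation will be needed to be rigorous.
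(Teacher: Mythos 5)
Your proposal is correct and follows essentially the same route as the paper: bound the space--time Fisher information $\int_0^T\int|\nabla\log\eta^\pi|^2\,d\eta^\pi\,ds$ by combining the ESM bound with $\|\bs_\theta\|_\infty<\infty$ via the triangle inequality, then convert this into an entropy bound on the initial datum through the entropy-dissipation identity of Proposition~\ref{prop: ConnectionOfESMandISMProp}. Your extra step of restarting the heat flow at $s_1>0$ and passing to the limit by lower semicontinuity is in fact the rigorous way to close the argument (the paper invokes Proposition~\ref{prop: ConnectionOfESMandISMProp} directly, whose hypothesis $\pi\log\pi\in L^1$ is the very conclusion being proven), and the only blemish is a harmless constant-factor slip in your first display ($\int_{s_1}^T\int|\nabla\rho|^2/\rho = 4\int_{s_1}^T\|\nabla\sqrt{\rho}\|_{L^2}^2\,ds$, not $2$), which does not affect finiteness.
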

\begin{proof}
    Since $\|\nabla \bs_{\btheta}\|_{\infty}< \infty$, for some constant $C=C(R,d,e_{nn})>0$ it holds 
    \[2\|\nabla \sqrt{\eta^{\pi}}\|_{L^2}^2 = \int_0^T \int |\nabla\log(\eta^{\pi})|^2 d\eta^{\pi}(s,x)ds \leq C.\]
    Moreover, by the regularizing properties of the diffusion we have 
    \[\int \eta^{\pi}(T)\log(\eta^{\pi}(T))dx <\infty\]
    and the result follows from Proposition \ref{prop: ConnectionOfESMandISMProp}.
\end{proof}

\section{Discussion: PDE regularity theory and UQ for generative modeling }

Our main contribution is the study of generalization in score-based generative models from the perspective of uncertainty quantification. The regularity theory of nonlinear PDEs is the key technical tool that produces our results. We emphasize that the tools we use here can be used generative models beyond SGMs, and that we have not pushed our analysis to the limits of our tools in SGMs. Moreover, we also emphasize some downstream UQ applications for SGMs that may be of future interest.

\subsection{The significance of the \emph{regularizing} properties of SGMs }

A surprising result of our work is deriving bounds for the stronger $L^1$ distance in terms of the weaker $\bd_1$ distance (see \eqref{eq: L1-D1-first} and Theorem~\ref{thm: ESMBounds}). The key insight (Equation \eqref{eq: regularizingduality}) is that the evolution of observables defined by the KBE \eqref{eq: TestFunctionIntro} \emph{regularizes} the test function. We have not fully exploited the regularizing effects of \eqref{eq: TestFunctionIntro} in this paper, as we only focused on $L^1$ and $\bd_1$ estimates. 

\paragraph{Improved bounds in Sobolev spaces $H^s$} \label{rmk: moreregularizing}
    To illustrate other extensions and choices of $\mathcal{Y}$ in \eqref{eq: regularizingduality}, observe that in the trivial case when $b^1=b^2 =0$, \eqref{eq: FundamentalDualityIntro} simplifies to $\int \lambda(T,x)\psi(x)dx=\int \lambda(0,x)(\Gamma(T)\star  \psi)(x)dx, $
    where $\Gamma$ is the heat kernel. If $\|\psi\|_{\infty}\leq 1$, then $\Gamma \star \psi (T) \in C^{\infty}$ and we have estimates of the form $\bd_1(m^1(T),m^2(T))\leq C(s,T,d)\|m_1-m_2\|_{H^{-s}}$
    for all $s\in \N$. When $b^1,b^2$ are not identically zero we still expect such estimates, though they will depend on the regularity of $b^1$. 
    To highlight the importance of regularizing effects, note that by \cite{fournier2015rate}, if $\pi\in \mathcal{P}(\T^d)$, then for the empirical measure $\pi^N,$ we expect $\bd_1(\pi, \pi^N) \lesssim \frac{1}{N^{\frac{1}{d}}}$. However if
   $s>\frac{d}{2}$, $\|\pi - \pi^N\|_{H^{-s}}\lesssim \frac{1}{\sqrt{N}}.$ This suggests that improved regularity may influence overcoming the curse of dimensionality.

\subsection{A connection to likelihood-free inference}

Computing expectations with respect to posterior distributions is a key task in Bayesian inference. Generative modeling, in particular, has a key role in future developments of \emph{likelihood-free} inference \cite{baptista2020conditional,baptista2024bayesian,shi2022conditional,song2021solving}. For generative models to be trustworthy for inference, they must to be shown to be robust. The WUP theorem provides error bounds for approximating expectations with respect to some true unknown distribution, and may be significant for SGMs in likelihood-free inference.

For example, suppose we wish to estimate $\mathbb{E}_{\pi}h$ for some distribution $\pi$ and observable $h$, and an SGM $m_g(T)$ is constructed to approximate the expectation. Bounds of the form \eqref{eq:cartoonbound}, such as the WUP theorem, translate into guarantees on the expectations: 
\begin{equation}
    \left|\mathbb{E}_{\pi} h - \mathbb{E}_{m_g(T)} h \right| \le \bd(m_g(T),\pi) \le \mathcal{F}(e_1,e_2,e_3,e_4,e_5). 
\end{equation}
In the context of SGM, the inequalities are \emph{a posteriori} bounds, meaning they can be computed after learning the model. Additional regularity on $h$ may yield improved guarantees.

\subsection{Enabling distributionally robust optimization (DRO)}

Robust UQ methods are based  on the perspective that learning  any complex model will typically involve multiple sources of uncertainty due to modeling choices, model reduction or learning from
imperfect data. These uncertainties are not just in parameters but are inherently present in the mathematical model itself and will  propagate to any predictions. There is substantial related work in recent years using a distributional robustness perspective, \cite{Rahimian2019DistributionallyRO},  using divergences or probability metrics and their variational representations to quantify the impact of model uncertainty around a baseline model that may be either learned (e.g. a generative model)  or could be just an  empirical distribution from an unknown true distribution. The approach can generally be described as quantifying an uncertainty set around the baseline model that the worst-case distribution belongs to via some  neighborhood defined in terms of a probability divergence \cite{dupuis2011uq,  GOTOH2018448},  a Wasserstein distance \cite{kuhn2018DRO_Wass} or maximum mean discrepancy \cite{jegelka2019DRO_MMD}. There are, however, drawbacks to each of these approaches:  a divergence ball contains only distributions with the same support as the baseline, while the uncertainty set may be hard to determine practically. The WUP Theorem is a related robust UQ notion where the uncertainty ball is in an IPM, e.g. 1-Wasserstein, MMD, or TV. However, it allows us to  bypass  the robust UQ for stochastic processes relying on restrictive, path-space probability divergence-based approaches or Girsanov's Theorem, \cite{dupuis2016path, birrell2021pathmodelUQ}. Furthermore,  the WUP Theorem bounds use PDE theory to provide a computable uncertainty set, as we also demonstrate in the case of DSM, see Theorem~\ref{thm: MainTheorem}.

\bibliographystyle{unsrt}
\bibliography{bibliography.bib}

\appendix 
\section{Technical results}
\subsection{HJB estimates}\label{sec:HJBEstimates}
In this subsection, we gather all the regularity estimates for the HJB equations. Although as mentioned in the introduction in the current work we assumed $\sigma(t) = \sqrt{2}$ and $f= 0$ in the OU processes \eqref{eq:noising}, here we prove the regularity results for a general non-degenerate diffusion $\alpha(t)$. The addition of $f$ may be incorporated in the drift $b$.
\begin{prop}\label{prop; HJBEstimates}
    Let $\Omega \subset \R^d$, $v : \R^d\rightarrow \R$ and $b:[0,T]\times \Omega \rightarrow \R$ be given with $\|\nabla  b\|_{\infty} < \infty$. Consider the solution $u: [0,T]\times \Omega \rightarrow \R$ of 
    \begin{equation}
        \begin{cases}
            -\pt u -\alpha(t)\Delta u +\frac{\alpha(t)}{2}|\nabla u|^2+ b \cdot \nabla u = 0 \text{ in }[0,T) \times \Omega,\\
            u(T,x) = v(x) \text{ in }\Omega.
        \end{cases}
    \end{equation}
    Assume moreover that for some $M>0$
    \[0<\frac{1}{M}\leq \alpha(t).\]
    Then up to a universal constant $C>0$ the following holds

        \begin{equation}\label{eq: HJBGradientC^1Terminal}
            \|\nabla u(t)\|_{\infty}^2 \leq C(1+\|\nabla  b\|_{\infty}M)\|v\|_{C^1}.
        \end{equation}
    
            \begin{equation}\label{eq: HJBGradinentBoundedTerminal}
                (T-t)\|\nabla u(t)\|_{\infty}^2 \leq CM(T\|\nabla  b\|_{\infty}+1) \|v\|_{\infty}.
            \end{equation}
\end{prop}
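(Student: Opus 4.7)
The plan is to apply the classical Bernstein technique, deriving a PDE for $z = \tfrac{1}{2}|\nabla u|^2$ and then analyzing a suitable linear combination of $z$ and $u$ via the maximum principle. The key advantage over a direct Gr\"onwall argument applied to a PDE for $\nabla u$ is that we obtain pointwise bounds that do \emph{not} grow exponentially in $T$; this is precisely why the Hopf--Cole step to HJB form pays off.

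First I would differentiate the HJB equation in $x_k$, multiply by $\partial_k u$, and sum over $k$, using the identity $\nabla u \cdot \Delta \nabla u = \Delta z - |D^2 u|^2$, to arrive at
\[
-\pt z - \alpha \Delta z + (\alpha \nabla u + b)\cdot \nabla z + \alpha |D^2 u|^2 = -(\nabla u)^\top (\nabla b)\,\nabla u.
\]
The right-hand side is controlled by $\|\nabla b\|_\infty \cdot 2z$, while the nonnegative coercive term $\alpha |D^2 u|^2$ on the left --- which appears precisely because we squared the gradient --- provides crucial slack in the next step.

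For estimate \eqref{eq: HJBGradientC^1Terminal}, I would combine the $z$-equation with $-C$ times the HJB equation to produce, for $w = z - Cu$,
\[
-\pt w - \alpha \Delta w + (\alpha \nabla u + b)\cdot \nabla w + \alpha |D^2 u|^2 + \tfrac{C\alpha}{2}|\nabla u|^2 = -(\nabla u)^\top (\nabla b)\,\nabla u.
\]
At an interior maximum of $w$ on $[0,T)\times\Omega$ the first three LHS terms are nonnegative, so $\tfrac{C\alpha}{2}|\nabla u|^2 \le \|\nabla b\|_\infty |\nabla u|^2$. Choosing $C \ge 4 M \|\nabla b\|_\infty$ (using $\alpha \ge 1/M$) forces $|\nabla u| = 0$ at any such interior max, so $\sup w$ is attained at $t = T$ and is controlled by $\tfrac{1}{2}\|\nabla v\|_\infty^2 + C \|v\|_\infty$. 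Combined with the maximum principle for HJB giving $\|u\|_\infty \le \|v\|_\infty$, one recovers the desired bound on $z = w + Cu$. For estimate \eqref{eq: HJBGradinentBoundedTerminal}, where $v$ lacks gradient information, I would weight by $(T-t)$: taking $w = (T-t)\, z - Cu$ and multiplying the $z$-PDE by $(T-t)$ produces an additional $-z$ contribution coming from $\pt (T-t) = -1$. An analogous maximum-principle argument with $C$ of order $M(1 + T\|\nabla b\|_\infty)$ absorbs both this $-z$ term and the $\|\nabla b\|_\infty |\nabla u|^2$ term, again forcing $|\nabla u| = 0$ at an interior max, so that $w$ is controlled by its terminal value $w(T,\cdot) = -C v(\cdot)$. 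Rearranging yields the stated $(T-t)$-weighted estimate.

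The two main technical obstacles are (i) ensuring existence of an interior maximum of $w$: on the torus $R\T^d$ this is immediate by compactness, while on $\R^d$ one must either invoke a priori growth control on $u$ inherited from $v$ via the maximum principle, or regularize by adding a small quadratic penalty $\varepsilon|x|^2$ and pass to the limit; and (ii) carefully tracking constants in the $(T-t)$-weighted case in order to obtain the stated dependence $M(T\|\nabla b\|_\infty + 1)$ rather than a looser $TM(\|\nabla b\|_\infty + 1)$, which requires balancing the parameter $C$ against the $-z$ term introduced by the time weight so that both compete against $\|\nabla b\|_\infty |\nabla u|^2$ with the right scaling.
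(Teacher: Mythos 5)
Your proposal follows essentially the same Bernstein argument as the paper: the same PDE for $z=\tfrac{1}{2}|\nabla u|^2$, the same auxiliary functions $w=z-Cu$ and $w=(T-t)z-Cu$, and the same maximum-principle dichotomy between an interior-in-time maximum (excluded, or forcing $\nabla u=0$, once $C\gtrsim M\|\nabla b\|_\infty$, resp.\ $C\gtrsim M(1+T\|\nabla b\|_\infty)$) and a terminal-time maximum, combined with $\|u\|_\infty\le\|v\|_\infty$. The only cosmetic difference is that you phrase the interior case as forcing $\nabla u=0$ rather than as an outright contradiction; in that case $w\le C\|u\|_\infty\le C\|v\|_\infty$ at the maximum anyway, so the stated estimates follow exactly as in the paper.
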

\begin{proof}
    Let $z:[0,T]\times \Omega \rightarrow \R$ be given by 
    \[z = \frac{1}{2}|\nabla u|^2.\]
    Differentiating the equation for $u$ in $i=1,\cdots, d$ yields
    \[-\pt u_i -\alpha(t)\Delta u_i +\alpha(t)\nabla u\cdot \nabla u_i +b \cdot \nabla u_i +b_i\cdot \nabla u =0,\]
    and thus multiplying by $u_i$ and summing over $i$, we obtain the following equation for $z$
    \begin{equation}\label{eq: zPDE}
        \begin{cases}
            -\pt z -\alpha(t)\Delta z +\alpha(t)|\nabla ^2 u|^2 +b \cdot \nabla z +\alpha(t) \nabla u\cdot \nabla z +\langle \nabla u\nabla b,\nabla u\rangle =0,\\
            z(T) = \frac{1}{2}|\nabla v|^2.
        \end{cases}
    \end{equation}
    In the above $\nabla b$ is the matrix with entries
    \[[\nabla b]_{i,j} = \frac{\partial b^{i}}{\partial x_j}\]
    for 
    \[b=(b^{1},\cdots ,b^{d}).\]
    Let $w:[0,T]\times \Omega \rightarrow \R$ be given by 
    \[w(t,x) = z - Cu\]
    where $C>0$ is a constant to be determined later. Assume that the function $w$ achieves its maximum at $(s_0,x_0)\in [0,T]\times \Omega$. We will look at the following cases:
    \begin{itemize}
        \item \textbf{Case 1:} Assume that $0\leq s_0<T$ and we will show that for $C>0$ large enough this leads to a contradiction. At $(s_0,x_0)$ we have the optimality conditions
    \[\nabla z(s_0,x_0) = C\nabla  u(s_0,x_0)\]
    \[ \Delta z(s_0,x_0) \leq C\Delta u(s_0,x_0)\]
    \[\pt z(s_0,x_0)\leq C\pt u(s_0,x_0).\]
    Hence,
    \begin{align*} C(-\pt u(s_0,x_0)-\alpha(s_0)&\Delta u(s_0,x_0)+b\cdot \nabla u (s_0,x_0))  \\ & \leq  (-\pt z(s_0,x_0)-\alpha(s_0)\Delta z(s_0,x_0)+b \cdot \nabla z(s_0,x_0))\end{align*}
    and using the respective equations for $u,z$ we obtain that at $(s_0,x_0)$ 
    \[-C\frac{\alpha(s_0)}{2}|\nabla u|^2 \leq  -\alpha(s_0)|\nabla ^2u|^2 - \langle \nabla u \cdot \nabla b,\nabla u\rangle -\alpha(s_0) \nabla u\cdot \nabla z .\]
    Using the optimality conditions yields
    \[\alpha(s_0)\frac{C}{2}|\nabla u|^2+\alpha(s_0)|\nabla ^2 u|^2 \leq - \langle \nabla u \cdot \nabla b,\nabla u\rangle \leq \|\nabla  b\|_{\infty}|\nabla  u |^2,\]
    which is a contradiction if $C\geq 2 \|\nabla  b\|_{\infty} M$. Thus for 
    \[C_0 = 2 \|\nabla  b\|_{\infty}M\]
    the function 
    \[z -C_0 u\]
    achieves its maximum at some point $(T,x_0)$. 
        \item \textbf{Case 2:} Assume that $s_0 = T$. Then, for every $(t,x)\in [0,T]\times \Omega$ we have 
        \[z(t,x) \leq w(T,x_0)+Cu(t,x) \leq \frac{1}{2}|\nabla  v|^2 + C\|v\|_{\infty} +C\|u_+\|_{\infty}\]
        and by maximum principle we also have that 
        \[\|u\|_{\infty} \leq \|v\|_{\infty}.\]
        Thus, up to a universal constant
        \[|\nabla u(t,x)|^2\lesssim (1+\|\nabla  b\|M)\|v\|_{C^1}, \]
        which proves estimate \eqref{eq: HJBGradientC^1Terminal}.
    \end{itemize}
    Now we prove estimate \eqref{eq: HJBGradinentBoundedTerminal}. The proof follows the previous strategy only this time we consider the function 
    \[w(t,x) = (T-t)z -Cu,\]
    where again $C>0$ will be determined later. Assume that the maximum occurs at some point $(s_0,x_0)$.
    \begin{enumerate}
        \item \textbf{Case 1:} Assume that $s_0 < T$. Looking again the corresponding optimality conditions and using the equations for $u$ and $z$ we have at $(s_0,x_0)$
        \[\Big(\frac{C\alpha(s_0)}{2}-1\Big)|\nabla u|^2 \leq -(T-s_0) \langle \nabla u \cdot \nabla b,\nabla u\rangle \leq T\|\nabla  b\|_{\infty}|\nabla  u |^2,\]
    which is a contradiction if $C\geq 2TM\|\nabla  b\|_{\infty}+2M$.
    \item \textbf{Case 2:} Assume that $s_0 = T$, then for all $(t,x)\in [0,T]\times \Omega$ we have
    \[z(t,x) \leq w(T,x_0)+C_0u(t,x) = -C_0v(T,x)+C_0u(t,x) \leq 2C_0\|v\|_{\infty}.\]
    Thus up to a dimensional constant we have 
    \[(T-t)|\nabla u|^2(t,x) \lesssim M(T\|\nabla  b\|_{\infty}+1) \|v\|_{\infty}.\]
    \end{enumerate}
\end{proof}
\begin{cor}\label{cor: LinearEstimates}
     Let $\Omega \subset \R^d$, $v : \R^d\rightarrow \R$ and $b:[0,T]\times \Omega \rightarrow \R$ be given with $\|\nabla  b\|_{\infty} < \infty$. Given $\psi:\Omega \rightarrow \R$, with $\psi \geq 1$ consider the solution $\phi: [0,T]\times \Omega \rightarrow \R$ of 
    \begin{equation}
        \begin{cases}
            -\pt \phi -\alpha(t)\Delta \phi + b \cdot \nabla \phi = 0 \text{ in }[0,T) \times \Omega,\\
            \phi(T,x) = \psi(x) \text{ in }\Omega.
        \end{cases}
    \end{equation}
    Assume moreover that for some $M>0$
    \[0<\frac{1}{M}\leq \alpha(t).\]
    Then up to a dimensional constant $C>0$ we have the following estimates
    \begin{equation}\label{eq: LinearDIffusionTerminalLipschitzNoTime}
        \|\nabla \phi(t)\|_{\infty}^2 \leq  C\|\psi\|_{C^1}^{3}(1+\|\nabla b\|_{\infty}M)
    \end{equation}
    \begin{equation}\label{eq: LinearDIffusionTerminalBounded}
        (T-t)\|\nabla \phi(t)\|_{\infty}^2 \leq  C\|\psi\|_{\infty}^3 M(T\|\nabla b\|_{\infty}+1) 
    \end{equation}
\end{cor}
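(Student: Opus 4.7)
The plan is to derive Corollary \ref{cor: LinearEstimates} as an immediate consequence of Proposition \ref{prop; HJBEstimates} via the Hopf--Cole transform, which converts the linear backward parabolic equation for $\phi$ into the HJB equation for $u = -2\log\phi$. First I would verify that the transform is well-defined: since $\psi \ge 1$ and the linear backward equation obeys the maximum principle, we have $1 \le \phi(t,x) \le \|\psi\|_\infty$ everywhere, so $u = -2\log\phi$ is bounded and satisfies $0 \le u \le 2\log\|\psi\|_\infty$. A short computation using $\phi = e^{-u/2}$, $\nabla\phi = -\tfrac{\phi}{2}\nabla u$, and $\Delta\phi = -\tfrac{\phi}{2}\Delta u + \tfrac{\phi}{4}|\nabla u|^2$ shows $u$ solves exactly the HJB equation of Proposition \ref{prop; HJBEstimates} with terminal data $v = -2\log\psi$.

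Next, I would invoke the two estimates \eqref{eq: HJBGradientC^1Terminal} and \eqref{eq: HJBGradinentBoundedTerminal} on $\|\nabla u\|_\infty$, then pass back to $\phi$ using the identity $\nabla\phi = -\tfrac{\phi}{2}\nabla u$, which yields
\begin{equation}
\|\nabla\phi(t)\|_\infty^2 \le \tfrac{1}{4}\|\phi(t)\|_\infty^2 \|\nabla u(t)\|_\infty^2 \le \tfrac{1}{4}\|\psi\|_\infty^2 \|\nabla u(t)\|_\infty^2.
\end{equation}
For \eqref{eq: LinearDIffusionTerminalLipschitzNoTime}, I control $\|v\|_{C^1} = \|-2\log\psi\|_{C^1}$ in terms of $\|\psi\|_{C^1}$: since $\psi \ge 1$, the elementary bounds $\log\psi \le \psi$ and $|\nabla\log\psi| = |\nabla\psi|/\psi \le |\nabla\psi|$ give $\|v\|_{C^1} \le 2\|\psi\|_{C^1}$. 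Combining with the factor $\|\psi\|_\infty^2 \le \|\psi\|_{C^1}^2$ produces the $\|\psi\|_{C^1}^3$ power in \eqref{eq: LinearDIffusionTerminalLipschitzNoTime}. For \eqref{eq: LinearDIffusionTerminalBounded} the argument is analogous but uses the time-weighted HJB bound and $\|v\|_\infty = 2\|\log\psi\|_\infty \le 2\|\psi\|_\infty$, yielding the $\|\psi\|_\infty^3$ power after multiplication by $\|\phi\|_\infty^2 \le \|\psi\|_\infty^2$.

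There is no serious obstacle here once the Hopf--Cole transform is carried out cleanly; the only step requiring care is the maximum principle argument that guarantees $\phi \ge 1$, which is needed to make $\log\phi$ well-behaved and to ensure $u \ge 0$ so that the bound $\|u\|_\infty \le \|v\|_\infty$ used in the proof of Proposition \ref{prop; HJBEstimates} applies without sign issues. The constants absorbed into $C$ depend only on dimension, and the $M$ and $\|\nabla b\|_\infty$ dependence is inherited verbatim from Proposition \ref{prop; HJBEstimates}.
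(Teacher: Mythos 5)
Your proof is correct and follows essentially the same route as the paper: the Hopf--Cole transform $u=-2\log\phi$, the maximum principle giving $1\le\phi\le\|\psi\|_\infty$, the two Bernstein estimates of Proposition~\ref{prop; HJBEstimates} applied with $v=-2\log\psi$ (using $\|v\|_{C^1}\le 2\|\psi\|_{C^1}$ and $\|v\|_\infty\le 2\|\psi\|_\infty$), and the translation back via $\nabla\phi=-\tfrac{\phi}{2}\nabla u$. One harmless slip: since $\phi\ge 1$ one actually has $u=-2\log\phi\le 0$ rather than $u\ge 0$, but only the bound $\|u\|_\infty\le\|v\|_\infty$ from the maximum principle is needed, and that holds regardless of the sign of $u$.
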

\begin{proof}
 Since $\psi\geq 1$ by the Maximum Principle we have that
    \[1\leq \phi(t,x) \leq \|\psi\|_{\infty}.\]
    Thus we can define $u:[0,T]\times \Omega \rightarrow \R$ by the formula
    \[\phi(t,x) = e^{-\frac{u(t,x)}{2}} \iff u(t,x) = -2\log(\phi(t,x)),\]
    We then have
    \[\pt \phi = -\frac{1}{2}\pt u \phi, \nabla \phi = -\frac{\nabla u}{2}\phi, \Delta \phi = \frac{|\nabla u|^2}{4}\phi - \frac{\Delta u}{2}\phi,\]
    and so by substituting in the equation for $\phi$ we obtain 
    \begin{equation}
        \begin{cases}
            -\pt u -\alpha(t)\Delta u +\frac{\alpha(t)}{2}|\nabla u|^2 +b\cdot \nabla u = 0,\\
            u(T,x) = -2\log(\psi(x)).
        \end{cases}
    \end{equation}
    The results will follow by Proposition \ref{prop; HJBEstimates} for $v(x) = -2\log(\psi(x))$. First we need the following estimates
    \[|v(x)| = 2|\log(\psi(x))| \leq 2\log(\|\psi\|_{\infty}) \leq 2 \|\psi\|_{\infty}\text{ since }\psi \geq 1, \]
    \[|\nabla v (x)| = 2\frac{|\nabla \psi (x)|}{|\psi(x)|} \leq 2\|\nabla \psi\|_{\infty}\]
    \[\implies \|v\|_{C^1}\leq 2\|\psi\|_{C^1}.\]
    Moreover, we note that 
    \[|\nabla u(t,x)|^2 = 4\frac{|\nabla \phi (t,x)|^2}{\phi^2} \geq 4\frac{|\nabla \phi(t,x)|^2}{\|\psi\|_{\infty}^2}.\]
    Therefore, by Proposition \ref{prop; HJBEstimates} estimate \eqref{eq: HJBGradientC^1Terminal} we obtain that up to a dimensional constant $C>0$ 
    \[\|\nabla \phi(t)\|_{\infty}^2 \leq C\|\psi\|_{\infty}^2(1+\|\nabla b\|_{\infty}M)\|\psi\|_{C^1} \leq C\|\psi\|_{C^1}^{3}(1+\|\nabla b\|_{\infty}M).\]
    Similarly by Proposition \ref{prop; HJBEstimates}, estimate \eqref{eq: HJBGradinentBoundedTerminal} again up to a dimensional constant $C>0$ we obtain 
    \[(T-t)\|\nabla \phi(t)\|_{\infty}^2 \leq C\|\psi\|_{\infty}^3 M(T\|\nabla b\|_{\infty}+1).\]
\end{proof}
\subsection{Long time behavior of periodic heat equation.}
The following result is classical and establishes exponential rate of convergence in $\bd_1$ for solutions to the heat equation in the uniform distribution on the torus. We provide a proof for the readers convenience.
\begin{prop}\label{prop: LongTime} Let $\Omega = R\T^d$ and $m_i\in \mathcal{P}(\Omega), i=1,2$ be two probability measures. Define $\rho^i:[0,\infty)\times \Omega\rightarrow [0,\infty)$ by 
    \begin{equation*}
    \begin{cases}

        \pt \rho^i -\Delta \rho^i  =0 \text{ in }(0,\infty)\times \Omega,\\
        \rho^i(0) = m_i \text{ in }\Omega. 
    \end{cases}
    \end{equation*}
    Then, there exists constants $C=C(d)>0,\omega = \omega(d)>0$ such that 
    \begin{equation*}
       \bd_1(\rho^2(t),\rho^1(t)) \leq CRe^{-\frac{\omega t}{R^2}}\bd_1(m_2,m_1)\text{ for }t\geq 0.
    \end{equation*}
    In particular by choosing $m_2= \unid$
    \begin{equation}
               \bd_1(\rho^1(t),\unid) \leq CRe^{-\frac{\omega t}{R^2}}\bd_1(m_1,\unid)\text{ for }t\geq 0.
    \end{equation}

\end{prop}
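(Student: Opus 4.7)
The plan is to reuse the Kolmogorov duality trick from Section~\ref{sec:WUPproofsketch} together with the spectral gap of $-\Delta$ on $R\T^d$. I set $\lambda(t) = \rho^1(t) - \rho^2(t)$, which solves $\pt \lambda - \Delta \lambda = 0$ with initial data the signed measure $m_1 - m_2$ of zero total mass. For any 1-Lipschitz test function $\psi:R\T^d\to\R$, let $\phi(s,x) = (\Gamma(t-s)\star\psi)(x)$ solve the backward heat equation with terminal data $\psi$. The same integration-by-parts argument leading to \eqref{eq: FundamentalDualityIntro}, now with $b^1 = b^2 = 0$, gives
\begin{equation*}
\int_{R\T^d} \psi(x)\,d(\rho^1(t)-\rho^2(t))(x) = \int_{R\T^d} (\Gamma(t)\star\psi)(x)\,d(m_1-m_2)(x).
\end{equation*}
Since $m_1 - m_2$ has zero total mass, the right-hand side is invariant under $\Gamma(t)\star\psi \mapsto \Gamma(t)\star\psi + c$, hence dominated by $\|\nabla(\Gamma(t)\star\psi)\|_\infty\,\bd_1(m_1,m_2)$. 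Taking the supremum over 1-Lipschitz $\psi$ (and using that $\nabla(\Gamma(t)\star\psi)$ is unchanged under $\psi\mapsto \psi+\mathrm{const}$, so I may assume $\int\psi=0$), the theorem reduces to showing
\begin{equation*}
\|\nabla(\Gamma(t)\star\psi)\|_\infty \le CR\,e^{-\omega t/R^2}
\end{equation*}
for every 1-Lipschitz, mean-zero $\psi$ on $R\T^d$.

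I would establish this gradient estimate by splitting into two regimes. For $0\le t\le R^2$, the heat semigroup is a contraction on Lipschitz functions: $\nabla(\Gamma(t)\star\psi) = \Gamma(t)\star\nabla\psi$ on the torus, and $\Gamma(t,\cdot)$ is a probability kernel, hence $\|\nabla(\Gamma(t)\star\psi)\|_\infty \le \|\nabla\psi\|_\infty \le 1$; and $e^{-\omega t/R^2}\ge e^{-\omega}$ in this range, so the estimate follows after absorbing $e^\omega$ into the dimensional constant. For $t\ge R^2$, the key input is Lemma~7.1 of \cite{cardaliaguet2013long}, a spectral-gap based $L^\infty$ decay estimate for the heat equation on the torus with zero-mean data: it gives $\|\Gamma(t)\star\psi\|_\infty \le CR^{-d}e^{-\omega t/R^2}\|\psi\|_{L^1}$. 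A 1-Lipschitz mean-zero $\psi$ on $R\T^d$ satisfies $\|\psi\|_\infty \le C_d R$ (its oscillation is bounded by the diameter $\sim R$), hence $\|\psi\|_{L^1}\le C_d R^{d+1}$, and so $\|\Gamma(t)\star\psi\|_\infty \le CR\, e^{-\omega t/R^2}$.

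To upgrade this $L^\infty$ bound to the required gradient bound in the long-time regime, I exploit the semigroup property: for $t\ge 2R^2$, write $\Gamma(t)\star\psi = \Gamma(R^2)\star(\Gamma(t-R^2)\star\psi)$ and apply the heat-kernel gradient estimate \eqref{eq: HeatKernelGradientEstimate} with time parameter $R^2$ to the outer convolution, obtaining
\begin{equation*}
\|\nabla(\Gamma(t)\star\psi)\|_\infty \le \frac{C}{R}\|\Gamma(t-R^2)\star\psi\|_\infty \le Ce^{\omega}e^{-\omega t/R^2},
\end{equation*}
which is even stronger than needed. The remaining intermediate window $R^2\le t\le 2R^2$ is handled by combining the contraction estimate with the boundedness of $e^{-\omega t/R^2}$ on this window. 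The special case $m_2 = \unid$ in the second assertion is immediate because the uniform density is the stationary solution of the heat equation on $R\T^d$, so $\rho^2(t)\equiv \unid$ for all $t$. The main obstacle is invoking the Cardaliaguet lemma, but it is already used elsewhere in the paper and is a standard consequence of the Poincaré inequality on $R\T^d$ (smallest nonzero eigenvalue of $-\Delta$ equal to $(2\pi/R)^2$) combined with the ultracontractivity of the heat semigroup, reduced from the unit torus by a rescaling argument.
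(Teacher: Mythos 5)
Your proof is correct, but it reaches the exponential decay by a genuinely different mechanism than the paper. Both arguments start from the same adjoint identity $\int \psi\, d(\rho^1(t)-\rho^2(t)) = \int (\Gamma(t)\star\psi)\, d(m_1-m_2)$. The paper then stays entirely self-contained: it uses only the smoothing bound \eqref{eq: HeatKernelGradientEstimate} to get the one-step estimate $\bd_1(\rho^2(T),\rho^1(T)) \le (C/\sqrt{T})\,\bd_1(m_2,m_1)$ on the unit torus, iterates it over blocks of length $T$ chosen so that $C/\sqrt{T}\le e^{-1}$, fills in intermediate times with the $\bd_1$-contraction of the heat semigroup, and finally rescales to $R\T^d$; no spectral-gap input is needed, and the rate $\omega$ emerges from the block length. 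You instead reduce the claim to a uniform decay estimate for $\|\nabla(\Gamma(t)\star\psi)\|_\infty$ and import the spectral-gap $L^1\to L^\infty$ decay of Cardaliaguet's Lemma 7.1, upgrading it to a gradient bound by one extra convolution with $\Gamma(R^2)$ via the semigroup property — a clean trick that gives the rate in one shot and ties $\omega$ to the Poincar\'e constant of $R\T^d$. Two small remarks: (i) that lemma appears in the source only inside a commented-out block, so it is not actually ``used elsewhere in the paper'' as compiled — you would need to state and cite it explicitly (your sketch via Poincar\'e plus ultracontractivity is fine for that); (ii) in the short-time regime your bound $\|\nabla(\Gamma(t)\star\psi)\|_\infty\le 1$ only sits below $CRe^{-\omega t/R^2}$ when $CR\ge e^{\omega}$, but this caveat is already built into the statement itself (at $t=0$ it reads $\bd_1\le CR\,\bd_1$), so it is not a defect of your argument relative to the paper's.
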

\begin{proof}
    First we assume that $R =1$ and recall that if $\Gamma(t,x)$ denotes the heat kernel on the unit torus $\T^d$ then for a constant $C=C(d)>0$ we have the normalizing effect
    \[|\nabla \Gamma(t) \star g| \leq C\frac{\|g\|_{\infty}}{\sqrt{t}}.\]
    Fix $\psi: \T^d\rightarrow \R$ such that $\text{Lip}(\psi) \leq 1$ and $\psi (0) = 0$. Such a function $\psi$ will also satisfy
    \[\|\psi\|_{\infty} \leq 1.\]
    Fix $T>0$ and let $\phi: \T^d \times [0,T]\rightarrow \R$ be given by 
    \begin{equation}
        \begin{cases}
            -\pt \phi -\Delta \phi = 0 \text{ in }[0,T)\times \T^d,\\
            \phi(T,x) = \psi(x).
        \end{cases}
    \end{equation}
    By testing against $\phi$ in the equation satisfied by $\lambda = \rho^2-\rho^1$ we obtain 
    \[\int \psi(x)d(\rho^2(T)-\rho^1(T))(x) = \int \phi(0)d(m_2-m_1) \leq \bd_1(m_2,m_1)\|\nabla \phi(0)\|_{\infty} \leq C\frac{\bd_1(m_2,m_1)}{\sqrt{T}}.\]
    Taking the supremum in $\psi$ yields 
    \[\bd_1(\rho^2(T),\rho^1(T))\leq \frac{C}{\sqrt{T}}\bd_1(\rho^2(0),\rho^1(0)).\]
    Repeating the same argument as above with initial conditions $m_i= \rho^i(T)$, we obtain 
    \[\bd_1(\rho^2(2T),\rho^1(2T)) \leq \frac{C}{\sqrt{T}}\bd_1(\rho^2(T),\rho^1(T)) \leq \Big(\frac{C}{\sqrt{T}}\Big)^2 \bd_1(m_2,m_1)\]
    and by induction for all $k\in \N$ we have 
    \[\bd_1(\rho^2(kT),\rho^1(kT)) \leq \Big(\frac{C}{\sqrt{T}} \Big)^k \bd_1(m_2,m_1)=C^k T^{-\frac{k}{2}}\bd_1(m_2,m_1).\]
    Let $\theta \geq 0$ and fix a $k\in \N$ such that 
    \[kT \leq \theta < (k+1)T.\]
    Since the heat operator defines a contraction in $\bd_1$ we have that 
    \[\bd_1(\rho^2(\theta),\rho^1(\theta)) \leq \bd_1(\rho^2(kT),\rho^1(kT))\leq (C\sqrt{T})^{-k}\bd_1(m_2,m_1).\]
    Moreover, we have that
    \[-k \leq -\frac{\theta}{T}+1\]
    thus
    \[\bd_1(\rho^2(\theta),\rho^1(\theta)) \leq (C\sqrt{T})^{-\frac{\theta}{T}+1}\bd_1(m_2,m_1)\]
    and so choosing $T$ large enough so that 
    \[C\sqrt{T}\geq e\]
    yields the result for $R=1$. For a general $R>0$, we simply apply the above to the functions $\lambda^i:[0,\infty)\times \T^d$ defined by 
    \[\lambda^i(t,x) = R^d\rho^i(R^2t,Rx)\]
    which yields 
    \[\bd_1(\lambda^2(t),\lambda^1(t)) \leq Ce^{-\omega t}\bd_1(\lambda^2(0),\lambda^1(0)).\]
    Since 
    \[\bd_1(\lambda^2(t),\lambda^1(t)) = \sup\limits_{\psi:\T^d\rightarrow \R, \text{Lip}(\psi)\leq 1}\int_{\T^d} \psi(x) R^d(\rho^2(tR^2,Rx)-\rho^1(tR^2,Rx)) \]
    \[= \sup\limits_{\psi}\int_{R\T^d} \psi\Big( \frac{u}{R}\Big)(\rho^2(tR^2,u)-\rho^1(tR^2,u))\]
    \[= R\sup\limits_{\psi:R\T^d\rightarrow \R, \text{Lip}(\psi)\leq 1}\int_{R\T^d}\psi(u)(\rho^2(tR^2,u)-\rho^1(tR^2,u)) = R\bd_1(\rho^2(R^2t,\rho^1(R^2 t))) \]
    the result follows.
\end{proof}

\end{document}